\documentclass[letterpaper, 10 pt, conference]{ieeeconf}
\IEEEoverridecommandlockouts  % DO NOT CHANGE THIS
\usepackage{array}
\usepackage{booktabs}
\usepackage{amsfonts}
\usepackage{algorithm}
\usepackage[noend]{algpseudocode}
\usepackage{stmaryrd}
\usepackage{amssymb}
\usepackage{amsmath}
\usepackage{mathrsfs} 
\usepackage{graphicx}
\usepackage{cite}
\usepackage{paralist}

\newtheorem{lemma}{Lemma}
\newtheorem{theorem}{Theorem}
\newtheorem{proposition}{Proposition}
\newtheorem{definition}{Definition}

\newcommand{\setcond}[2]{\{\, #1 \mid #2 \,\}}

\DeclareMathOperator{\proj}{proj}
\newcommand{\reals}{\mathbb{R}}
\newcommand{\posreals}{\reals_{\geq 0}}
\newcommand{\naturals}{\mathbb{N}}

\newcommand{\cost}{\mathsf{Cost}}

\newcommand{\pruned}{\mathord{\gtrdot}}
\newcommand{\unrolled}{\otimes}

\newcommand{\proc}[1]{\textsc{#1}}

\newcommand{\dfa}{\mathcal{A}}
\newcommand{\mdpst}[1][M]{\mathcal{#1}}
\newcommand{\rcap}{\mathsf{cap}}
\newcommand{\acc}{F}
\newcommand{\lang}{\mathcal{L}}
\newcommand{\distr}{\text{Distr}}
\newcommand{\nature}{\gamma}
\newcommand{\natures}{\Gamma}
\newcommand{\strategy}{\sigma}
\newcommand{\strategies}{\Sigma}
\newcommand{\fpath}{\xi}
\newcommand{\fpaths}{\text{FPaths}}
\newcommand{\ipaths}{\text{Paths}}
\newcommand{\last}{\mathit{lst}}

\newcommand{\cylinder}[1]{\mathit{Cyl}(#1)}
\newcommand{\prob}{\mathrm{Pr}}

\newcommand{\AP}{\textsf{AP}} 
\newcommand{\ltl}{\text{LTL}}
\newcommand{\ltlf}{\ltl_{f}}
\newcommand{\llast}{\text{last}}
\newcommand{\lnext}{\bigcirc}
\newcommand{\luntil}{\mathbin{\mathcal{U}}}
\newcommand{\lfinally}{\Diamond}
\newcommand{\lglobally}{\Box}

\iftrue
    \usepackage{todonotes}
    \newcommand{\at}[1]{\todo[inline,color=teal!10,caption={AT}]{\textbf{AT:} #1}}
    \newcommand{\yh}[1]{\todo[inline,color=orange!10,caption={YH}]{\textbf{YH:} #1}}
    \newcommand{\ly}[1]{\todo[inline,color=blue!10,caption={LY}]{\textbf{LY:} #1}}
    \newcommand{\py}[1]{\todo[inline,color=yellow!10,caption={PY}]{\textbf{PY:} #1}}
    \newcommand{\czm}[1]{\todo[inline,color=red!10,caption={CZM}]{\textbf{CZM:} #1}}

\else
    \newcommand{\at}[1]{}
    \newcommand{\yh}[1]{}
    \newcommand{\ly}[1]{}
    \newcommand{\py}[1]{}
    \newcommand{\czm}[1]{}
\fi

\title{Resource-Constrained Robotic Planning in the face of Mixed Uncertainty}

\author{Yihao Yin$^{1,2,3}$, Pian Yu$^{4}$, Andrea Turrini$^{2}$, Zhiming Chi$^{2,3}$, Yong Li$^{2}$, and Lijun Zhang$^{2,3}$
\thanks{*Corresponding author: {pian.yu@ucl.ac.uk, liyong@ios.ac.cn}}%
\thanks{$^{1}$Hangzhou Institute for Advanced Study (HIAS), UCAS, China.}%
\thanks{$^{2}$Key Laboratory of System Software (Chinese Academy of Sciences), Institute of Software Chinese Academy of Sciences, Beijing, China.}%
\thanks{$^{3}$University of Chinese Academy of Sciences, Beijing, China.}%
\thanks{$^{4}$University College London, London, UK.}%
}

\begin{document}

\maketitle
\thispagestyle{empty}
\pagestyle{empty}
\begin{abstract}
    Robots operate under significant uncertainty, from quantifiable noise to unquantifiable unknowns, and must account for strict operational constraints, such as limited resources. 
    In this paper, we consider the problem of synthesizing robust strategies to guide a robot's actions in fulfilling a given task, while ensuring the system never exhausts its resources.
    To solve this problem, we first model the robotic system as a Consumption Markov Decision Process with Set-valued Transitions (CMDPST), a unified framework modelling nondeterministic actions, quantifiable and unquantifiable uncertainty, and resource consumption.
    Then, we combine the CMDPST with the task specification, expressed as a Linear Temporal Logic over finite traces ($\ltlf$) formula.
    Lastly, we address the resource-constrained optimal robust strategy synthesis problem, which aims to synthesize a strategy that maximizes the probability of satisfying the $\ltlf$ objective without resource exhaustion. 
    
    Our solution involves two techniques: a direct unrolling-based method and a more efficient, optimized approach that leverages state-space pruning for better performance.   
    Experiments on a warehouse transportation network show the effectiveness of the proposed solutions.
\end{abstract}
\section{Introduction}
\label{sec:introduction}
Autonomous systems are increasingly adopted in safety-critical domains, from autonomous driving\cite{levinson2011towards,xu2014motion} and planetary exploration\cite{arm2023scientific,bahraini2021autonomous} to medical robotics \cite{dupont2021decade}. 
A core challenge in these systems is decision-making under uncertainty while ensuring strict adherence to safety and
operational constraints, such as limited energy, memory, or computational resources. 
These systems are often modeled as stochastic discrete-time dynamical systems tasked with achieving complex goals, and must guarantee not only the satisfaction of such goals but also the continuous maintenance of critical safety conditions during execution\cite{ABATE20082724}.

Markov Decision Processes (MDPs) have emerged as a standard formalism for modeling uncertain systems, thanks to their ability to capture probabilistic outcomes and sequential decision-making~\cite{Puterman2014,MausamKolobov2012}. 
However, traditional MDPs are deficient in directly encoding hard resource constraints or environmental uncertainties~\cite{Cognetti2018,Thrun2005,DuToit2011}. 
Recent extensions such as resource-constrained MDPs~\cite{Blahoudek2020,blahoudek2022efficient} have sought to address these limitations by incorporating cost bounds or modeling adversarial uncertainties. 
Yet, most of these approaches focus on expected cost optimization and assume known probabilistic models, which limits their applicability in settings where the environment is uncertain or adversarial and safety guarantees are paramount.

Concurrently, formal languages like Linear Temporal Logic ($\ltl$)~\cite{Pnueli1977} and its finite-trace variant $\ltlf$~\cite{degiacomo2013linear} have proven to be powerful tools for specifying high-level mission objectives in autonomous systems. 
These specifications allow expressing rich temporal requirements such as ``eventually deliver the package to location B while avoiding obstacle zones'' or ``do not deplete fuel before reaching the charging station''.
By combining MDPs with $\ltl$ and $\ltlf$, recent work has enabled efficient task planning in stochastic systems with temporal objectives~\cite{Ding2014,Schillinger2019,Guo2018,cai2021reinforcement,wongpiromsarn2023formal}. 
However, these methods often do not account for resource constraints.
In addition, in many real-world robotic scenarios, uncertainty stems from both quantifiable sources, like sensor noise, and unquantifiable ones, such as adversarial environments or human error ~\cite{muvvala2024stochastic}.
These aspects cannot be adequately captured by ordinary MDPs, which assume precisely specified probabilistic transitions.

To address these challenges, we propose in this work \emph{Consumption Markov Decision Processes with Set-valued Transitions} (CMDPSTs), a
unified framework modelling nondeterministic actions, quantifiable and unquantifiable uncertainty, and resource consumption. This model is  built upon \emph{Markov Decision Processes with Set-valued Transitions} (MDPSTs)~\cite{trevizan2007,trevizan2008, DBLP:conf/ijcai/Yu0GKV24,unquantified2025}. 
MDPSTs generalize MDPs by allowing transitions to be defined as sets of possible successor distributions, thereby capturing both quantifiable (e.g., stochastic) and unquantifiable (e.g., adversarial or nondeterministic) uncertainties. In particular, probabilistic weights in MDPST capture quantifiable uncertainty, while set-valued transitions model unquantifiable uncertainty without assuming precise probability distributions.
Recent work has demonstrated the effectiveness of MDPSTs in robust planning with temporal-extended objectives \cite{unquantified2025}.  
To further capture the notion of limited and renewable resources, we extend the MDPST formalism by introducing ``resource levels", ``reload states", and ``capacity bounds", allowing one to model systems that must periodically replenish critical resources to safely complete their tasks.

In this work, we propose to formulate the resource-constrained robust planning problem for robotic systems in the face of mixed (quantifiable and unquantifiable) uncertainty as the robust strategy synthesis problem for CMDPSTs. In addition, $\ltlf$ is employed as a concise and compact specification language.

The main contributions of this paper are summarized as follows. 
i) We introduce CMDPSTs (a
unified modelling framework) and formulate a novel \emph{resource constrained optimal
robust strategy synthesis} problem, which aims to compute a strategy that optimizes $\ltlf$ satisfaction while preventing resource exhaustion. 
ii) A na\"ive approach is first proposed to solve the problem by converting the CMDPST into an unrolled MDPST and applying an existing algorithm. To improve efficiency, we then introduce an optimization that prunes the CMDPST prior to unrolling, significantly reducing the state space required for synthesis. Experimental results show that our pruning method consistently achieves notable runtime improvements over the na\"ive approach, all while preserving the correctness of the synthesized strategy. 

The remainder of the paper is organized as follows.  
Section~\ref{sec:motivatingExample} presents a motivating example illustrating the challenges of resource-constrained planning under mixed uncertainty.  
Section~\ref{sec:preliminaries} reviews preliminaries on $\ltlf$ and MDPSTs.  
Section~\ref{sec:cmdpsts} formalizes Consumption MDPSTs.  
Section~\ref{sec:unroll} describes the na\"ive unrolling-based synthesis pipeline. 
Section~\ref{sec:optimizedStrategySynthesis} develops the optimized approach.
Section~\ref{sec:experiments} reports experimental evaluations on the warehouse benchmarks.  
Section~\ref{sec:conclusion} concludes the paper. 

\section{Motivating Example}
\label{sec:motivatingExample}

\begin{figure}[t]
\centering

\includegraphics[width=0.7\linewidth]{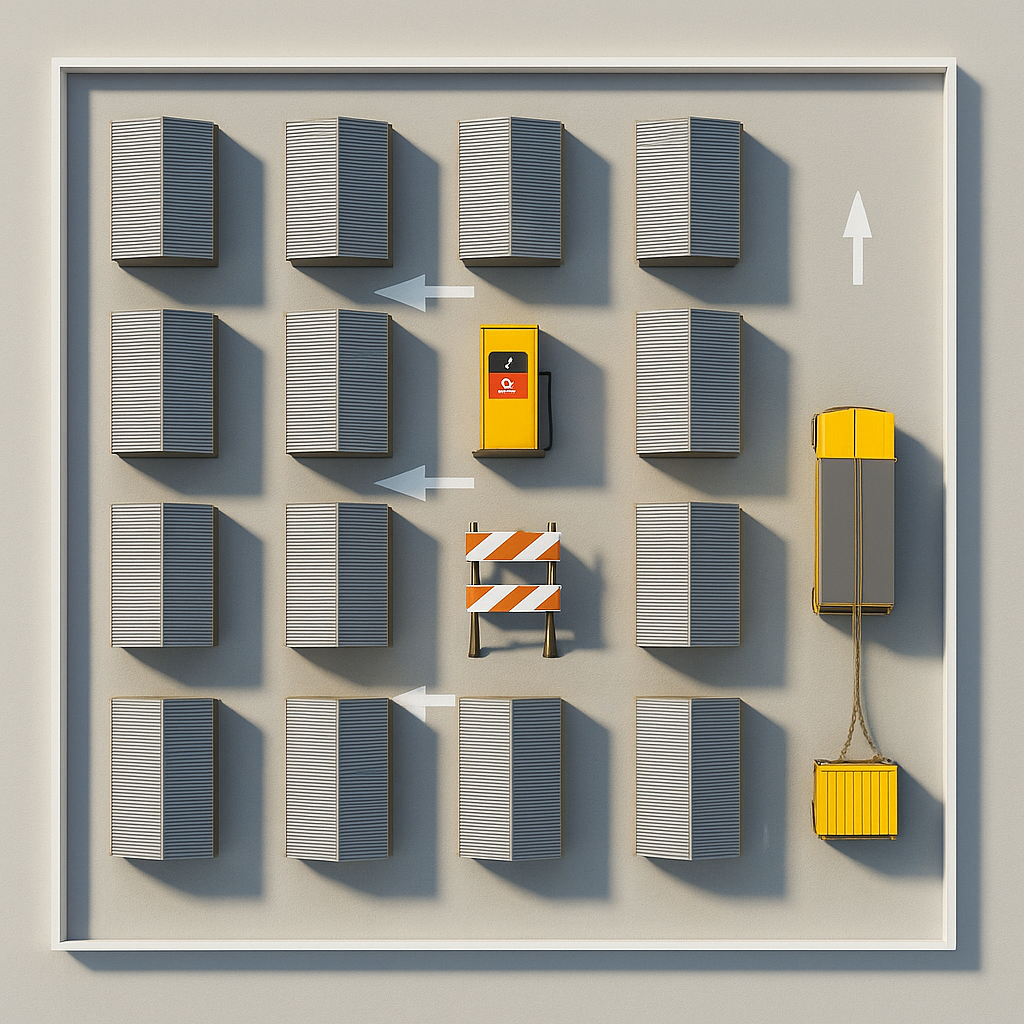}
% \vspace{-5pt} 
\caption{Warehouse network}
% \vspace{-20pt} 
\label{fig:Warehouse}
\end{figure}
As an example of a Consumption MDPST, we consider a warehouse transportation scenario where an automated guided vehicle (AGV) is required to transport goods across multiple warehouses; 
see Fig.~\ref{fig:Warehouse} for an example. 
Within this warehouse network, we consider three typical situations:
\begin{inparaenum}
\item
    some warehouses are temporarily inaccessible due to construction or maintenance (obstacle states);
\item
    some warehouses are equipped with charging stations (reload states), where the AGV can replenish its battery; and
\item
    certain warehouses are designated as task-related locations, such as pickup, transfer, and delivery points.
\end{inparaenum}
For simplicity, we assume that each warehouse is of exactly one of these types.

The AGV is required to accomplish a transportation task with temporal constraints such as reach the warehouse $W_{p}$ to pick up goods, move to warehouse $W_{t}$ for transfer, or deliver the goods to the target warehouse $W_{d}$, while always avoiding obstacle warehouses $W_{o}$.

Each movement of the AGV (forward, backward, left, or right) consumes a certain amount of energy. 
Since the distance between warehouses may be long, the AGV must carefully plan its route to avoid energy depletion, which may require visiting reload warehouses (charging stations). 
Meanwhile, external disturbances may occur: 
for instance, a human operator may intervene and manually redirect the AGV, causing it to deviate from the planned route. 
Thus, the AGV’s motion is subject to both stochastic uncertainty (e.g., execution noise) and nondeterministic disturbances (e.g., human intervention).

Unlike simple shortest-path planning, this scenario requires the AGV to satisfy a \emph{temporal-extended task}, which can be naturally formalized by formal languages like $\ltlf$. 
The strategy controlling the AGV should not merely identify a shortest route to the final destination, but it must also respect resource constraints. 
Our goal is to compute robust strategies that guarantee the AGV never runs out of energy, while maximizing the probability of successfully completing the temporal task despite environmental or human-induced disturbances.

\section{Preliminaries}
\label{sec:preliminaries}

In this section we first briefly recall  Linear Temporal Logic on Finite Traces ($\ltlf$)~\cite{baier2006planning,degiacomo2013linear} and its equivalent deterministic finite automata representation, and then the Markov Decision Processes with Set-valued Transitions (MDPSTs) modelling framework~\cite{trevizan2007,trevizan2008}.

% \at{To save space, we can use inline lists instead of itemized lists for the definitions of DFAs and MDPSTs}

\subsection{$\ltlf$}

$\ltlf$~\cite{degiacomo2013linear} has the same syntax as LTL~\cite{Pnueli1977}, but is interpreted on \emph{finite} yet unbounded horizons.
%It is suitable for modeling terminating systems such as robotic task sequences and business process executions.
The syntax of an $\ltlf$ formula over a finite set of atomic propositions $\AP$ is defined as
% \vspace{-3pt} 
\[
    \varphi ::= p \in \AP\mid \neg \varphi \mid \varphi \land \varphi \mid \lnext \varphi \mid \varphi \luntil \varphi.
\]
%where $p \in \AP$ represents an atomic proposition that captures basic system states or properties.
Here $\lnext$ (strong Next) and $\luntil$ (Until) are temporal operators.
We also use common operators, such as $\lfinally$ (eventually) and $\lglobally$ (always), where $\lfinally \varphi \equiv \text{true} \luntil \varphi$ and $\lglobally \varphi \equiv \neg \lfinally \neg \varphi$.

Let $\pi = \pi_{0} \pi_{1} \pi_{2} \dots \pi_{N} \in (2^\AP)^*$ be a finite trace and $\varphi$ be an $\ltlf$ formula.
We denote by $\pi[i\ldots] = \pi_i \ldots \pi_N$ the fragment that starts at position $i$.
The semantics of $\varphi$ is defined inductively as follows:
% \vspace{-5pt} 
\begin{align*}
    \pi \models p & \iff p \in \pi_{0} ,\\ 
 \pi \models \neg \varphi & \iff \pi \not \models \varphi, \\
 \pi \models \varphi_{1} \land \varphi_{2} & \iff \text{$\pi \models \varphi_{1}$ and $\pi\models \varphi_{2}$}, \\
 \pi \models \lnext \varphi & \iff \text{$N >0 $ and $\pi[1\ldots] \models \varphi$}, \\
 \pi \models \varphi_{1} \luntil \varphi_{2} & \iff \text{$\exists 0 \leq k \leq N. \pi[k\ldots]\models \varphi_{2}$ and}, \\
    & \hphantom{{}\iff{}} \quad \forall 0 \leq j < k. \pi[j\ldots] \models \varphi_{1}.
%\pi, i \models \llast & \iff i = N
\end{align*}

The language of $\varphi$, denoted $[\varphi]$, is
the set of all finite traces over $2^{\AP}$ that satisfy $\varphi$, i.e., $[\varphi] = \setcond{\pi \in (2^{\AP})^{*}}{\pi \models \varphi}$.
\iffalse
The semantics of an $\ltlf$ formula $\varphi$ is defined over finite but unbounded traces $\pi = \pi_{0} \pi_{1} \pi_{2} \dots \pi_{N}$, where each $\pi_{i} \subseteq \AP$ denotes the set of atomic propositions holding at position $i$.
The satisfaction relation $\pi, i \models \varphi$ is defined recursively as:
\begin{align*}
\pi, i \models p & \iff p \in \pi_{i} \\
\pi, i \models \neg \varphi & \iff \pi, i \not \models \varphi \\
\pi, i \models \varphi_{1} \land \varphi_{2} & \iff \text{$\pi, i \models \varphi_{1}$ and $\pi, i \models \varphi_{2}$} \\
\pi, i \models \lnext \varphi & \iff \text{$i < N$ and $\pi, i+1 \models \varphi$} \\
\pi, i \models \varphi_{1} \luntil \varphi_{2} & \iff \text{$\exists i \leq k \leq N. \pi, k \models \varphi_{2}$ and} \\
    & \hphantom{{}\iff{}} \quad \forall i \leq j < k. \pi, j \models \varphi_{1} \\
\pi, i \models \llast & \iff i = N
\end{align*}
We say that $\pi$ satisfies $\varphi$, denoted $\pi \models \varphi$, if $\pi, 0 \models \varphi$.
We denote by $[\varphi]$ the set of all finite traces that satisfy $\varphi$, i.e., $[\varphi] = \setcond{ \pi \in (2^\AP)^{*}}{\pi \models \varphi}$.
\fi

For an $\ltlf$ formula $\varphi$ over $\AP$, one can construct a deterministic finite automaton (DFA) $\dfa_{\varphi} = (\Sigma, Q, \bar{q}, \delta, \acc)$ recognizing $[\varphi]$, where $\Sigma = 2^{\AP}$ is the alphabet, $Q$ is a finite set of states, $\bar{q} \in Q$ is the initial state, $\delta \colon Q \times \Sigma \to Q$ is the transition function, and $\acc \subseteq Q$ is the set of accepting states.
A run $\rho = q_{0} q_{1} \dots q_{N + 1} \in Q^{+}$ over a word $w = w_{0} w_{1} \dots w_{N} \in \Sigma^{*}$ is a sequence of states such that $q_{0} = \bar{q}$ and $q_{i+1} = \delta(q_{i}, w_{i})$ for each $0 \leq i \leq N$.
The word $w = w_{0} w_{1} \dots w_{N} \in \Sigma^{*}$ is accepted by $\dfa_{\varphi}$ if $q_{N+1} \in \acc$.

\subsection{MDPSTs}

We now introduce MDPSTs, a framework capable of modeling both quantifiable and unquantifiable uncertainties in robotics~\cite{DBLP:conf/ijcai/Yu0GKV24,unquantified2025}.
An MDPST is a tuple $\mdpst = (S, \bar{s}, A, \mathcal{F}, \mathcal{T}, L)$
where $S$ is a finite set of states, $\bar{s} \in S$ is the designated initial state, $A$ is a finite set of actions ($A(s) \subseteq A$ denotes applicable actions at $s$), $\mathcal{F} \colon S \times A \to 2^{2^{S}}$ is the set-valued transition function, $\mathcal{T} \colon S \times A \times 2^{S} \to (0, 1]$ is the transition probability function with $\sum_{\Theta \in \mathcal{F}(s,a)} \mathcal{T}(s, a, \Theta) = 1$ for all $s \in S$ and $a \in A(s)$, and $L \colon S \to 2^{\AP}$ is the labeling function.
The main difference between MDPSTs and traditional MDPs is its  representation of \emph{transition uncertainty}.
MDPs map state-action pairs to a probability distribution over all states $S$, while MDPSTs map a state-action pair to a probability distribution over the power set of $S$.
The adversarial environment decides what successor in a subset will be selected and thus forms a \emph{feasible distribution} described below, where $\distr(X)$ denotes the set of probability distributions over the set $X$.
%the pair $(\mathcal{F}, \mathcal{T})$ to map state-action pairs to probability distributions over the \emph{power set}.
% 
%However, since the outcome set $\mathcal{F}(s,a)$ only provides subsets of successor states rather than direct distributions, we next define the notion of \emph{feasible distributions} to connect abstract transitions with executable behavior.

\begin{definition}[Feasible distribution]
\label{def:feasibleDistribution}
Given an MDPST $\mdpst = (S, \bar{s}, A, \mathcal{F}, \mathcal{T}, L)$, $s \in S$, and $a \in A(s)$,  $\mathfrak{h}_{s}^{a} \in \distr(S)$ is a \emph{feasible distribution} of $(s,a)$, denoted by $s \xrightarrow{a} \mathfrak{h}_{s}^{a}$, if there exist $\alpha_{\Theta} \in \distr(\Theta)$ for each $\Theta \in \mathcal{F}(s, a)$ such that $\mathfrak{h}_{s}^{a}(s') = \sum_{\Theta \in \mathcal{F}(s, a)}  \mathcal{T}(s, a, \Theta) \cdot \alpha_{\Theta}(s')$ for each $s' \in S$.
We denote by $\mathfrak{H}_{s}^{a}$ the set of all feasible distributions $\mathfrak{h}_{s}^{a}$ of $(s, a)$.    
\end{definition}

This definition captures all valid ways the system can evolve under uncertain transition sets.
Feasible distributions bridge the gap between set-valued transitions and executable policies.
Each $\alpha_{\Theta}(s')$ represents the (unknown) likelihood of transitioning to $s'$ given $\Theta$ was selected.

Conceptually, $\mathfrak{H}_{s}^{a}$ defines all possible transition distributions consistent with the MDPST's uncertainty model.

A (finite) \emph{path} $\fpath \in (S \times A)^{*} \times S$ of an MDPST $\mdpst$ is an alternating sequence of states and actions ending with a state, $\fpath = s_{0} a_{0} s_{1} \dots s_{N}$, such that for each $0 \leq i < N$, there is $\Theta_{i} \in \mathcal{F}(s_{i}, a_{i})$ such that $\mathcal{T}(s_{i}, a_{i}, \Theta_{i}) > 0$ and $s_{i+1} \in \Theta_{i}$; 
we let $\last(\fpath) = s_{N}$ denote its last state.
Infinite paths are defined similarly.
We let $\fpaths$ ($\ipaths$, resp.) denote the sets of all finite (infinite, resp.) paths of $\mdpst$.

There are two sources of nondeterminism in MDPSTs: 
the choice of the next action in the current state and the choice of the corresponding feasible distribution. 
The former is resolved by a \emph{strategy}, while the latter by a \emph{nature}.

Let $\bot \notin A$ be a fresh symbol we use to denote termination.
We now define a strategy that we look for in a planning problem.
\begin{definition}[Strategy]
\label{def:strategy}
    A function $\strategy \colon S \to \distr(A \cup \{\bot\})$ is a \emph{strategy} if $\strategy(s) \in \distr(A(s) \cup \{\bot\})$ for each $s \in S$.
\end{definition}
In an MDPST, a nature is an adversarial player that tries to prevent the agent from fulfilling its tasks.
\begin{definition}[Nature]
\label{def:nature}
    A function $\nature \colon S \times A \to \distr(S)$ is a \emph{nature} if $\nature(s, a) \in \mathfrak{H}_{s}^{a}$ for each $s \in S$ and $a \in A(s)$.
\end{definition}
We denote by $\strategies_{\mdpst}$ and $\natures_{\mdpst}$ the set of all strategies and natures for $\mdpst$, respectively. 

Given an MDPST $\mdpst$, a strategy $\strategy \in \strategies_{\mdpst}$, a nature $\nature \in \natures_{\mdpst}$, and a state $s$, $\strategy$ and $\nature$ induce a probability measure over the finite paths of $\mdpst$ from the state $s$ as follows:
the basic measurable events are the cylinder sets of finite paths, where the \emph{cylinder set} of a finite path $\fpath$ is the set $\cylinder{\fpath} = \setcond{\fpath' \in \ipaths}{\text{$\fpath$ is a prefix of $\fpath'$}}$.
The probability $\prob_{s}^{\strategy, \nature}$ of the cylinder set $\cylinder{\fpath}$ is defined inductively as follows:
$\prob_{s}^{\strategy, \nature}(\cylinder{s}) = 1$;
$\prob_{s}^{\strategy, \nature}(\cylinder{t}) = 0$ for $t \neq s$; and
$\prob_{s}^{\strategy, \nature}(\cylinder{\fpath}) = \prob_{s}^{\strategy, \nature}(\cylinder{\fpath'}) \cdot \strategy(\last(\fpath'))(a) \cdot \nature(\last(\fpath'), a)(t)$ for $\fpath = \fpath' a t$.
The probability of $\fpath$ is defined as $\prob_{s}^{\strategy, \nature}(\fpath) = \prob_{s}^{\strategy, \nature}(\cylinder{\fpath}) \cdot \strategy(\last(\fpath))(\bot)$.
Standard measure-theoretical arguments ensure that $\prob_{s}^{\strategy, \nature}$ extends uniquely to the $\sigma$-field generated by cylinder sets (see, e.g.,~\cite{Billingsley95}).
We write $\prob^{\strategy, \nature}$ for $\prob_{\bar{s}}^{\strategy, \nature}$ and define $\prob^{\strategy}$ as $\prob^{\strategy} = \min_{\nature \in \natures_{\mdpst}} \prob^{\strategy, \nature}$. 

We say that a strategy $\strategy$ is \emph{terminating} if $\prob^{\strategy}(\fpaths) = 1$.

\section{Consumption MDPSTs}
\label{sec:cmdpsts}

In addition to quantifiable and unquantifiable uncertainties, we now introduce our model called \emph{Consumption MDPSTs} (CMDPSTs) which take into account resource constraints.
This model in fact generalizes consumption MDPs~\cite{Blahoudek2020}, since MDPSTs are more general than MDPs.

\begin{definition}%[Consumption MDPST]
\label{def:cmdps}
A CMDPST is a tuple $\mdpst = (S, \bar{s}, A, \mathcal{F}, \mathcal{T}, L, C, R, \rcap)$
where
\begin{itemize}
  \item $(S, \bar{s}, A, \mathcal{F}, \mathcal{T}, L)$ is an MDPST,
  \item $C \colon S \times A \to \posreals$ is a resource consumption function,
  \item $R \subseteq S$ is the set of reload states where the resource level can be reset to full capacity, and
  \item $\rcap \in \posreals$ is the maximum resource capacity.
\end{itemize}
\end{definition}

The CMDPST framework introduces resource-aware path analysis, where for any finite path $\fpath = s_{0} a_{0} s_{1} \dots s_{N + 1}$, its cumulative resource consumption is computed as $\sum_{i = 0}^{N} C(s_{i}, a_{i})$. 
To support persistent operation in long-horizon tasks, specific states $R \subseteq S$ act as \emph{reload states} where the accumulated cost resets to zero. %, e.g., charging stations. 

A strategy is considered \emph{feasible} if, under all adversarial choices by nature, the cost of every finite path occurring with non-zero probability never exceeds the capacity $\rcap$. 
This ensures that the system remains operational under worst-case uncertainty without violating its resource constraints.
Formally, given a finite path $\fpath = s_{0} a_{0} s_{1} \dots s_{N+1}$, let $Z = \{0, N\} \cup \setcond{0 \leq i \leq N}{s_{i} \in R}$ be the positions where we reset the cost to zero. 
The maximum resource cost of $\fpath$ is
\[
    \cost(\fpath) = \max_{z \in Z} \sum_{i = z}^{\min Z \cap \naturals_{> z}} C(s_{i}, a_{i})
\]
and we call $\fpath$ \emph{feasible} if $\cost(\fpath) \leq \rcap$;
we call a strategy $\strategy$ \emph{feasible} if $\prob^{\strategy}(\setcond{\fpath \in \fpaths}{\text{$\fpath$ is feasible}}) = 1$.
The intuition is that, we can divide the feasible path $\fpath$ into several fragments whose ends are reload/reset states in which the total cost of each fragment must not exceed $\rcap$.

\begin{figure}[t]

    \centering
    \includegraphics[width=0.7\linewidth]{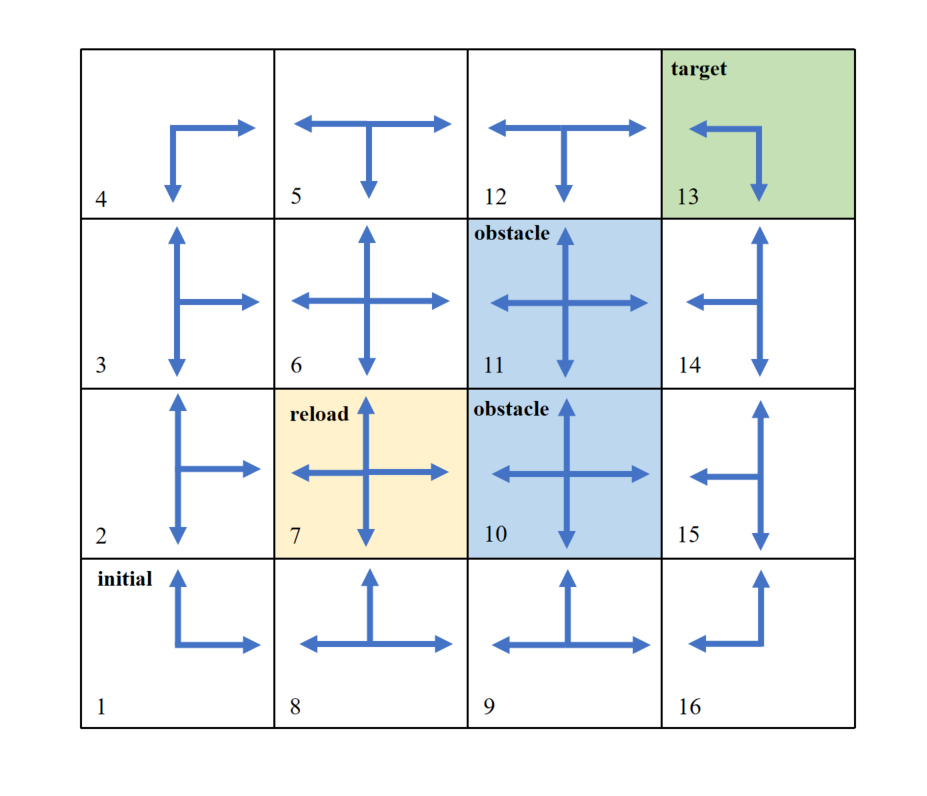}
    % \vspace{-15pt}
    \caption{Simplified Warehouse Planning}
    \label{fig:AGVplanning}
    % \vspace{-20pt} 
\end{figure}
Let us now motivate CMDPSTs for robot planning under uncertainties and resource constraints.
% \begin{example} 
Consider again the warehouse scenario introduced in Section~\ref{sec:motivatingExample} with 16 distinct warehouses arranged in a $4 \times 4$ grid; 
each warehouse corresponds to a state $s \in \mathcal{S} = \{1,2,\dots,16\}$, shown in Fig.~\ref{fig:AGVplanning}.  
Warehouse~7 is designated as a \emph{reload state}, where the AGV may recharge its battery.  
Warehouses~10 and~11 are \emph{obstacle states}, which are inaccessible due to construction.  
The AGV starts at warehouse~1 and must accomplish a delivery task with temporal requirements: 
% it must first visit warehouse~3 to \emph{pick up goods}, and then eventually reach warehouse~13, the designated \emph{delivery target}. 
it must eventually reach warehouse~13, the designated \emph{delivery target} while always \emph{avoiding} warehouses~10 and~11, which are closed.
%\at{Wouldn't be more interesting to have obstacles on 10 and 11 and have the target at 13?}

At each state, the AGV can choose to move according to one of the directions  
$A = \{\texttt{UP}, \texttt{DOWN},\texttt{RIGHT}, \texttt{LEFT}\}$, if possible.
At warehouse $s \in S$, each action $a \in A(s)$ incurs a \text{cost} $C(s, a) \in \posreals$ representing the energy consumed.  
The AGV is equipped with a battery of limited capacity $\rcap$, and its energy decreases according to $C(s,a)$.  
When the AGV reaches a reload warehouse, its battery is recharged to $\rcap$.  

The AGV's motion is modeled as a CMDPST, incorporating both stochastic uncertainty and adversarial nondeterminism (e.g., human operator intervention).  
The stochastic transition function $\mathcal{T}$ is defined as follows.  
When the AGV attempts to move to a desired warehouse $i$ under an action:
\begin{itemize}
    \item with probability $0.8$, the action succeeds, and the AGV transitions to warehouse $i$;
    \item with probability $0.2$, the action underperforms due to actuation noise, and the AGV transitions to warehouse $i{-}1$, if $i > 1$.
\end{itemize}
To model environmental (e.g., human intervention) uncertainty, the set-valued transition function $\mathcal{F}$ includes an additional nondeterministic option.  
For instance, the environment may force the AGV into warehouse $i{-}2$, regardless of the intended action, provided $i > 2$.
As a concrete example, let the AGV be at warehouse~2 with $5$ units of remaining energy and consider the \texttt{RIGHT} action having $C(2, \texttt{RIGHT}) = 2$.  
The intended target is warehouse~7 and the possible successor sets and their probabilities are
% \begin{itemize}
% \item 
    $\mathcal{F}(2, \texttt{RIGHT}) = \{ \{7, 5\}, \{6, 5\} \}$ and
% \item 
    $\mathcal{T}(2, \texttt{RIGHT}, \{7,5\}) = 0.8$ and 
    $\mathcal{T}(2, \texttt{RIGHT}, \{6, 5\}) = 0.2$.
% \end{itemize}
If the AGV moves to warehouse~7, the reload rule applies and its energy level is reset to $\rcap$;  
otherwise, the remaining energy decreases to $3$.  
% \end{example}

Our goal is to synthesize a feasible strategy (i.e., the AGV never runs out of energy) and optimally robust (i.e., it maximizes the probability of delivering the goods independently on how humans interfere with the AGV).

\begin{definition}[$\beta$-robust strategy]
    Given a CMDPST $\mdpst$, an $\ltlf$ formula $\varphi$, and a threshold $\beta$, we say that a strategy $\strategy \in \strategies_{\mdpst}$ is \emph{$\beta$-robust for $\varphi$} if it is feasible and $\prob^{\strategy}([\varphi]) \geq \beta$.

    A $\beta$-robust strategy $\strategy$ is \emph{optimal} for $\varphi$ if $\prob^{\strategy}([\varphi]) \geq \prob^{\strategy'}([\varphi])$ for all $\beta$-robust strategies $\strategy'$ for $\varphi$.
\end{definition}

\paragraph*{Problem formulation}
To synthesize an optimal robust strategy that ensures the satisfaction of a high-level task specified by an $\ltlf$ formula $\varphi$ while guaranteeing that the resource is never exhausted under adversarial environment behaviors, we formulate the \emph{Resource Constrained Optimal Robust Strategy Synthesis} problem:
given a CMDPST model $\mdpst$ and an $\ltlf$ specification $\varphi$, the objective is to compute a \emph{feasible} strategy (i.e., the resource never runs out under all adversarial choices by nature) such that, the probability of satisfying the $\ltlf$ specification $\varphi$ is maximized.

\section{Na\"ively Constrained Optimal Robust Strategy Synthesis}
\label{sec:unroll}

We first introduce a na\"ive method for solving the resource constrained optimal robust strategy synthesis problem given a CMDPST $\mdpst$ and an $\ltlf$ formula $\varphi$ over $\AP$.
The basic idea is to convert the CMDPST to an MDPST so we can use a well-developed approach to synthesize an optimal robust strategy from an MDPST.
The algorithm is given as follows:
\begin{enumerate}
    \item construct a DFA $\dfa_{\varphi}$ from $\varphi$ such that $\lang(\dfa_{\varphi}) = [\varphi]$,
    \item build the product CMDPST $\mdpst^{\times}$ of $\mdpst$ and $\dfa_{\varphi}$,
    \item unroll $\mdpst^{\times}$ into a MDPST $\mdpst^{\unrolled}$, and
    \item compute the optimal robust strategy on $\mdpst^{\unrolled}$.
\end{enumerate}

Note that it is possible to combine steps 2) and 3) together without constructing the intermediate product CMDPST $\mdpst^{\times}$ explicitly.
However, in order to make our presentation simpler and to further include optimizations, we will introduce the construction step by step.

\paragraph*{DFA construction}
The first $\ltlf$-to-DFA conversion was proposed in~\cite{degiacomo2013linear} and the current state of the art conversions normally employ a compositional methodology~\cite{DBLP:conf/aaai/BansalLTV20,DBLP:conf/aips/GiacomoF21,DBLP:conf/fmcad/BansalKL24}.
In our method, we use the off-the-shelf tool from~\cite{DBLP:conf/aips/GiacomoF21}.
Assume that the constructed DFA is $\dfa_{\varphi} = (\Sigma = 2^{\AP}, Q, \bar{q}, \delta, \acc)$ with $\lang(\dfa_{\varphi}) = [\varphi]$.

\paragraph*{Product $\mdpst^{\times}$ construction}
Given a CMDPST $\mdpst = (S, \bar{s}, A, \mathcal{F}, \mathcal{T}, L, C, R, \rcap)$ and the DFA $\dfa_{\varphi}$, the \emph{product CMDPST} $\mdpst^{\times}$ of $\mdpst$ and $\dfa_{\varphi}$ is defined as
% \vspace{-5pt} 
% \[
%     \mdpst^{\times} = (S^{\times}, \bar{s}^{\times}, A^{\times}, \mathcal{F}^{\times}, \mathcal{T}^{\times}, L^{\times}, C^{\times}, R^{\times}, \rcap^{\times}, \acc^{\times})
% \]
the CMDPST $\mdpst^{\times} = (S^{\times}, \bar{s}^{\times}, A^{\times}, \mathcal{F}^{\times}, \mathcal{T}^{\times}, L^{\times}, C^{\times}, R^{\times}, \rcap^{\times}, \acc^{\times})$
% \vspace{-5pt} 
where
\begin{itemize}
\item 
    $S^{\times} = S \times Q$ and $\bar{s}^{\times} = (\bar{s}, \bar{q})$;
\item
    $A^{\times} = A$;
\item
    $\mathcal{F}^{\times}$ and $\mathcal{T}^{\times}$ are defined as follows:
    for each $(s, q) \in S^{\times}$ and $a\in A^{\times}$, let $q' = \delta(q, L(s))$. 
    Then $\mathcal{F}^{\times}((s, q), a) = \setcond{\Theta \times \{q'\}}{\Theta \in \mathcal{F}(s, a)}$ and, for each $\Theta \in \mathcal{F}(s, a)$, $\mathcal{T}^{\times}((s, q), a, \Theta \times \{q'\}) = \mathcal{T}(s, a, \Theta)$;
\item 
    $L^{\times}((s, q)) = L(s)$ for each $(s, q) \in S^{\times}$;
\item 
    $C^{\times}((s, q), a) = C(s, a)$;
\item 
    $R^{\times} = \setcond{(s,q) \in S^{\times}}{s \in R}$;
\item 
    $\rcap^{\times} = \rcap$; and
\item 
    $\acc^{\times} = \setcond{(s, q) \in S^{\times}}{q \in \acc}$.
\end{itemize}

By building the product $\mdpst^{\times}$, it is easy to identify which paths in $\mdpst$ generate traces in $[\varphi]$, since every path in $\mdpst^{\times}$ that can reach a final state in $\acc^{\times}$ is one of such paths.
This is, however, not enough to derive a feasible strategy as we still need to consider the resource constraints in the $\mdpst^{\times}$.
To this end, we can convert the CMDPST $\mdpst^{\times}$ to a MDPST $\mdpst^{\unrolled}$, where a well-developed optimal robust strategy approach (e.g.,~\cite{DBLP:conf/ijcai/Yu0GKV24}) can be exploited.
The conversion approach is quite simple yet effective: we record in each state also its current resource level.
We call this conversion approach the \emph{unroll} approach.

\paragraph*{Unrolled $\mdpst^{\unrolled}$ construction}
For the input CMDPST $\mdpst^{\times} = (S^{\times}, \bar{s}^{\times}, A^{\times}, \mathcal{F}^{\times}, \mathcal{T}^{\times}, L^{\times}, C^{\times}, R^{\times}, \rcap^{\times}, \acc^{\times})$, we now define the \emph{unrolled product MDPST} $\mdpst^{\unrolled}$ as
% \vspace{-4pt} 
% \[
%     \mdpst^{\unrolled} = (S^{\unrolled}, \bar{s}^{\unrolled}, A^{\unrolled}, \mathcal{F}^{\unrolled}, \mathcal{T}^{\unrolled}, L^{\unrolled}, \acc^{\unrolled})
% \]
the MDPST $\mdpst^{\unrolled} = (S^{\unrolled}, \bar{s}^{\unrolled}, A^{\unrolled}, \mathcal{F}^{\unrolled}, \mathcal{T}^{\unrolled}, L^{\unrolled}, \acc^{\unrolled})$
%C^{\unrolled}, R^{\unrolled}, \rcap^{\unrolled})
where
\begin{itemize}
\item 
    $S^{\unrolled} \subseteq S^{\times} \times [0, \rcap]$ is the smallest set such that $\bar{s}^{\unrolled} = (\bar{s}^{\times}, \rcap) \in S^{\unrolled}$ and that is closed under the functions $\mathcal{F}^{\unrolled}$ and $\mathcal{T}^{\unrolled}$ defined below,
\item 
    $A^{\unrolled}((s, c)) = A^{\times}(s)$ for all $(s, c) \in S^{\unrolled}$,
\item 
    $\mathcal{F}^{\unrolled}$ and $\mathcal{T}^{\unrolled}$ are constructed as follows:
    for all $(s, c) \in S^{\unrolled}$ and $a \in A^{\times}(s)$ such that $C^{\times}(s, a) \leq c$ and for each $\Theta \in \mathcal{F}^{\times}(s, a)$, let
    $\Theta^{\unrolled} = \setcond{(s', \rcap)}{s' \in \Theta \cap R^{\times}} \cup \setcond{(s', c - C^{\times}(s, a))}{s' \in \Theta \setminus R^{\times}}$;
    then $\Theta^{\unrolled} \in \mathcal{F}^{\unrolled}((s, c), a)$ and $\mathcal{T}^{\unrolled}((s, c), a, \Theta^{\unrolled}) = \mathcal{T}^{\times}(s, a, \Theta)$,
\item 
    $L^{\unrolled}((s, c)) = L^{\times}(s)$ for all $(s, c) \in S^{\unrolled}$, and

\item 
    $\acc^{\unrolled} = \setcond{(s, c) \in S^{\unrolled}}{s \in \acc^{\times}}$.
\end{itemize}

This construction preserves the original transition probabilities while explicitly encoding current resource level into the state space.  
The resource level $c$ in each state $(s, c)$ of $S^{\unrolled}$ is positive, and this will guarantee that every path in $\mdpst^{\unrolled}$ is feasible.

\begin{proposition}
\label{prop:unrollingCorrectness}
    Given a product CMDPST $\mdpst^{\times}$, let $\mdpst^{\unrolled}$ be its unrolled MDPST.
    Then, the following properties hold:
    \begin{enumerate}
    \item 
        for every feasible strategy $\strategy^{\times} \in \strategies_{\mdpst^{\times}}$ there is a strategy $\strategy^{\unrolled} \in \strategies_{\mdpst^{\unrolled}}$ such that for every feasible path $\fpath^{\times} \in \fpaths_{\mdpst^{\times}}$ there is a path $\fpath^{\unrolled} \in \fpaths_{\mdpst^{\unrolled}}$ such that $\proj(\fpath^{\unrolled}) = \fpath^{\times}$ and $\prob^{\strategy^{\unrolled}}_{\mdpst^{\unrolled}}(\fpath^{\unrolled}) = \prob^{\strategy^{\times}}_{\mdpst^{\times}}(\fpath^{\times})$;
    \item 
        for every strategy $\strategy^{\unrolled} \in \strategies_{\mdpst^{\unrolled}}$, there is a feasible strategy $\strategy^{\times} \in \strategies_{\mdpst^{\times}}$ such that for every path $\fpath^{\unrolled} \in \fpaths_{\mdpst^{\unrolled}}$ we have that $\prob^{\strategy^{\times}}_{\mdpst^{\times}}(\proj(\fpath^{\unrolled})) = \prob^{\strategy^{\unrolled}}_{\mdpst^{\unrolled}}(\fpath^{\unrolled})$,
    \end{enumerate}
    where $\proj((s_{0}, c_{0}) a_{0} \dots (s_{n}, c_{n})) = s_{0} a_{0} \dots s_{n}$.
\end{proposition}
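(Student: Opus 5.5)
The proof rests on one observation: the resource component of a state in $\mdpst^{\unrolled}$ is a deterministic function of the projected history. Given a finite path $\fpath^{\times} = s_{0} a_{0} s_{1} \dots s_{n}$ of $\mdpst^{\times}$ starting in $\bar{s}^{\times}$, we define the resource levels $c_{0} = \rcap$ and, for $i \geq 0$, $c_{i+1} = \rcap$ if $s_{i+1} \in R^{\times}$ and $c_{i+1} = c_{i} - C^{\times}(s_{i}, a_{i})$ otherwise. The first lemma I would prove, by induction on $n$ using the definition of $\Theta^{\unrolled}$ and the side condition $C^{\times}(s,a) \leq c$, is the following. The sequence $(s_{0}, c_{0}) a_{0} \dots (s_{n}, c_{n})$ is a path of $\mdpst^{\unrolled}$ if and only if every $c_{i}$ is nonnegative. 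Moreover, this happens exactly when $\fpath^{\times}$ is feasible, since $\rcap - c_{i}$ is the consumption accumulated since the last reset, so $\cost(\fpath^{\times}) \leq \rcap$ is equivalent to all $c_{i} \geq 0$. The lemma also shows that $\proj$ is a bijection between $\fpaths_{\mdpst^{\unrolled}}$ and the feasible paths of $\mdpst^{\times}$, whose inverse is the lifting $\fpath^{\times} \mapsto \fpath^{\unrolled}$.

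Next I would relate transition probabilities. For a state $(s,c)$ and an enabled action $a$, the map $\Theta \mapsto \Theta^{\unrolled}$ is a bijection from $\mathcal{F}^{\times}(s,a)$ to $\mathcal{F}^{\unrolled}((s,c),a)$. It preserves $\mathcal{T}$, and it is a bijection on elements, because the successor $s'$ determines its resource level. Consequently feasible distributions correspond: $\mathfrak{h} \in \mathfrak{H}_{s}^{a}$ if and only if its push-forward $\mathfrak{h}^{\unrolled}$ lies in $\mathfrak{H}_{(s,c)}^{a}$. The same holds for natures: any nature of $\mdpst^{\unrolled}$ restricted along a path induces choices at $\mdpst^{\times}$ states, and conversely. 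The paper's strategies and natures are memoryless, so care is needed here, and this is where the main obstacle lies.

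For item 1, I take $\strategy^{\unrolled}((s,c)) = \strategy^{\times}(s)$. Any action in the support of $\strategy^{\times}(s)$ with $C^{\times}(s,a) > c$ would produce an infeasible path with positive probability, contradicting feasibility of $\strategy^{\times}$. So, at states reachable with positive probability, $\strategy^{\unrolled}$ only uses enabled actions; at unreachable states it can be redefined arbitrarily, for instance as $\bot$. The cylinder probabilities then multiply factor by factor along $\fpath^{\unrolled}$ and $\fpath^{\times}$, and the $\bot$ factor agrees as well. Minimizing over natures gives equality, using the nature correspondence above.

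For item 2, the converse direction is the hard step. Several unrolled states $(s,c)$ and $(s,c')$ may share the same $s$, so $\strategy^{\unrolled}$ need not descend to a memoryless strategy on $S^{\times}$. I would handle this by interpreting strategies of $\mdpst^{\times}$ as history-dependent on the finite path; this seems implicit in the statement, since the probability of $\fpath^{\times}$ must match that of its unique lift. Alternatively, I would argue that the resource level is recoverable from the path, which is exactly the lemma above. Then I define $\strategy^{\times}(\fpath^{\times}) = \strategy^{\unrolled}(\last(\fpath^{\unrolled}))$, where $\fpath^{\unrolled}$ is the lift. Feasibility of $\strategy^{\times}$ follows because every positive-probability path is the projection of an unrolled path, hence feasible by the lemma. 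Probability equality is again the factor-by-factor product, combined with the nature correspondence for the adversarial minimum.
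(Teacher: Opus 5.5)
Your route is the paper's: set $\strategy^{\unrolled}((s,c)) = \strategy^{\times}(s)$ and lift paths through the deterministic recurrence for the levels $c_i$. You improve on the paper's one-line proof in item 2. There, ``equating'' $\strategy^{\times}(s)$ with $\strategy^{\unrolled}((s,c))$ is ill-defined when $s$ is reached at several levels, and you correctly repair this with a level-dependent $\strategy^{\times}$.

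\textbf{The gap: natures.} You flag the nature step as the main obstacle and then only assert it; the paper's ``direct application of the definitions'' hides the same problem. Pushing $\nature^{\times}(s,a)$ forward at every $(s,c)$ turns a memoryless nature of $\mdpst^{\times}$ into one of $\mdpst^{\unrolled}$, so the unrolled minimum is at most the product one. The converse direction of your correspondence is false: an unrolled nature may resolve $(s,c)$ and $(s,c')$ differently, and a memoryless nature of $\mdpst^{\times}$ cannot.

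Here is a concrete counterexample, using a one-state DFA.
\begin{itemize}
\item Let $\mathcal{F}(s,a) = \{\{t_1\},\{t_1,t_2\},\{t_2\}\}$ with $\mathcal{T} = \tfrac{1}{3}$ on each set, and $\mathcal{F}(t_1,b) = \{\{s\}\}$.
\item Let $C \equiv 1$, $R = \{t_1\}$ and $\rcap = 10$.
\item Let $\strategy^{\times}$ play $a$, $b$, $\bot$ at $s$, $t_1$, $t_2$. This strategy is feasible.
\end{itemize}
The feasible path $s\,a\,t_1\,b\,s\,a\,t_2$ lifts through $(s,10)$ and $(s,9)$. A memoryless nature of $\mdpst^{\times}$ must split $\{t_1,t_2\}$ as $(q,1-q)$ at both visits, which gives probability $(1+q)(2-q)/9 \geq 2/9$. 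An unrolled nature can send that set to $t_2$ at $(s,10)$ and to $t_1$ at $(s,9)$, which gives $\tfrac{1}{3}\cdot\tfrac{1}{3} = \tfrac{1}{9}$. Strategy factors are at most $1$, so no $\strategy^{\unrolled}$ restores equality.

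Thus, under the paper's memoryless natures, item 1 is false, and ``minimizing over natures gives equality'' cannot be proved as you set it up.

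The repair is the one you already used for strategies. Let natures of $\mdpst^{\times}$ depend on the resource level recovered from the path, or make natures path-dependent on both sides. Then $\mathfrak{h} \mapsto \mathfrak{h}^{\unrolled}$ is a bijection between level-dependent natures of $\mdpst^{\times}$ and natures of $\mdpst^{\unrolled}$. Your factor-by-factor computation then gives equal path probabilities for each corresponding pair, and so the minima coincide.

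\textbf{Two smaller fixes.}
\begin{itemize}
\item Your lemma's criterion ``every $c_i \geq 0$'' misses an overdraft on a step entering a reload state, because there $c_{i+1}$ is reset to $\rcap$ regardless. The right condition is $c_i \geq C^{\times}(s_i,a_i)$ for all $i<n$, which is what the enabledness side condition of $\mdpst^{\unrolled}$ actually checks.
\item Your feasibility argument for $\strategy^{\times}$ in item 2 (every positive-probability path is a projection of an unrolled path) needs $A^{\unrolled}((s,c))$ to be read as $\setcond{a \in A^{\times}(s)}{C^{\times}(s,a) \leq c}$. This is also the only reading under which $\mdpst^{\unrolled}$ is a well-formed MDPST. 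With the literal $A^{\unrolled}((s,c)) = A^{\times}(s)$, $\strategy^{\unrolled}$ could put mass on a disabled action. Such an action has no successors in $\mdpst^{\unrolled}$ but yields infeasible paths of positive probability in $\mdpst^{\times}$. State this reading explicitly.
\end{itemize}
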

\begin{proof}
    Both properties are proved by equating $\strategy^{\times}(s^{\times})$ and $\strategy^{\unrolled}((s^{\times}, c))$; 
    for the first property, the path $\fpath^{\unrolled}$ is obtained from $\fpath^{\times} = s^{\times}_{0} a_{0} \dots s^{\times}_{n}$ as $\fpath^{\unrolled} = (s^{\times}_{0}, c_{0}) a_{0} \dots (s^{\times}_{n}, c_{n})$ where $c_{0} = \rcap$ and for each $0 \leq i < n$, $c_{i+1} = \rcap$ if $s^{\times}_{i + 1} \in R^{\times}$ and $c_{i+1} = c_{i} - C(s_{i}, a_{i})$ otherwise.
    A direct application of the definitions yields the stated properties.
\end{proof}

We now extend the warehouse transportation example from Fig.~\ref{fig:Warehouse} to incorporate \emph{resource consumption} and the \emph{unrolling construction}.  
Each warehouse $s \in \mathcal{S} = \{1, \dots, 16\}$ is now augmented with a resource level $c \in [0,\rcap]$, resulting in unrolled states of the form $(s,c)$.  
The AGV starts at $(1,\rcap)$, i.e., at warehouse~1 with the battery fully charged.

At each step, the AGV chooses action $a$ among the actions $A = \{\texttt{UP}, \texttt{DOWN}, \texttt{LEFT}, \texttt{RIGHT}\}$.  
Executing $a$ at state $(s,c)$ consumes $C(s,a)$ units of resource, leading to transitions
$(s,c) \xrightarrow{a} (s', c-C(s,a))$ for each $s' \in \Theta \in \mathcal{F}(s,a)$ when $C(s,a) \leq c$;
when $C(s,a) > c$, action $a$ is disabled.  

Reload warehouses allow resource replenishment: since warehouse~7 is a \emph{reload state}, any state $(7,c)$ reached is reset to $(7,\rcap)$.  
Obstacle warehouses (i.e., warehouses~10 and~11) remain prohibited in the unrolled model, and the labeling function $L^{\unrolled}$ is inherited from the base CMDPST.

As a concrete example, suppose the AGV is at unrolled state $(2, 5)$ and executes $\texttt{RIGHT}$ with $C(2, \texttt{RIGHT}) = 2$.  
In the original model, the intended target is warehouse~7.  
In the unrolled model, the possible successor sets are $\{(7, \rcap), (5, 3)\}$ with probability $0.8$ and $\{(6, 3), (5, 3)\}$ with probability $0.2$.
Here, the remaining resource decreases from $5$ to $3$ by the cost $C(2, \texttt{RIGHT}) = 2$, except when reaching warehouse~7, where it is reset to $(7, \rcap)$ due to the reload mechanism.

\paragraph*{Optimal robust strategy extraction}
Now, we will introduce the standard approach for MDPSTs to extract an optimal robust strategy.
Let $\mdpst^{\unrolled} = (S^{\unrolled}, \bar{s}^{\unrolled}, A^{\unrolled}, \mathcal{F}^{\unrolled}, \mathcal{T}^{\unrolled}, L^{\unrolled}, \acc^{\unrolled})$ be an unrolled MDPST.
To make our presentation more general, we also assume that we are given a winning region $\acc^{\unrolled} \subseteq W \subseteq S^{\unrolled}$ in which the agent has a strategy to reach almost surely target states $\acc^{\unrolled}$ against any nature.
Of course, we can always let $W = \acc^{\unrolled}$ and the following procedure will still be correct.

To obtain the optimal robust strategy, we first define a \emph{robust value function} $V \colon S^{\unrolled} \to [0,1]$ defined as:
% \vspace{-4pt} 
\[
    V(s) = \max_{\strategy} \min_{\nature} \prob^{\strategy, \nature}_{s}(\lfinally W)
\]
where $\strategy$ is the agent’s strategy and $\nature$ is the nature’s strategy (resolving uncertainty).

In order to compute the robust value function, we resort to solving a \emph{robust Bellman equation system}.
More precisely, for all $s \in W$, we set $V(s) = 1$. 
For $s \in S^{\unrolled} \setminus W$, $V(s)$ needs to satisfy the following constraint:
% \vspace{-3pt} 
\[
    V(s) = \max_{a \in A^{\unrolled}(s)} \sum_{\Theta \in \mathcal{F}^{\unrolled}(s,a)} \mathcal{T}^{\unrolled}(s, a, \Theta) \cdot \min_{s' \in \Theta} V(s')
\]

This definition captures the worst-case reachability probability under both adversarial nondeterminism and probabilistic uncertainty.

After computing the robust value function $V$, we are in fact able to record the action that leads to the maximal value in $V$.
Indeed, it is easy to obtain an optimal robust strategy $\strategy^{\unrolled} \colon S^{\unrolled} \to A^{\unrolled} \cup \{\bot\}$ on $\mdpst^{\unrolled}$ where $\strategy^{\otimes}(s)$ is the action in $A^{\unrolled}$ that leads to the maximal robust value $V(s)$ if $V(s) > 0$, or just $\strategy^{\otimes}(s) = \bot$ when $V(s) = 0$.
Note that, for reachability goals, there are history-independent optimal strategies on MDPSTs~\cite{DBLP:conf/ijcai/Yu0GKV24}. 

In order to obtain an optimal robust strategy on the original CMDPST $\mdpst$, we will use the resource level and the DFA state encoded in a state $s \in S^{\unrolled}$ as part of the memory.
Thus, we will obtain a history-dependent optimal robust strategy $\strategy$ on $\mdpst$.
The strategy $\strategy \colon S \times M \to A$ where $M = Q \times [0, \rcap] $ and the execution of $\strategy$ is defined as follows:
\begin{itemize}
\item 
    The initial memory state of the strategy is $\bar{m} = \langle \bar{q}, \rcap \rangle$.
\item 
    For a state $s \in S$ and a current memory state $m = \langle q, c\rangle$, we have $\strategy(s, m) = \strategy^{\unrolled}(s, q, c)$ and then the memory state $m'$ is updated to $\langle \delta(q, L(s)), c - C(s, a)\rangle$.
\item 
    When $m = \langle q, c\rangle$ involves a final state $q \in \acc$ in the DFA $\dfa_{\varphi}$, we let $\strategy(s, m) = \bot$.
    This means that the agent has fulfilled the task and decides to terminate.
\end{itemize}

\begin{theorem}
\label{thm:optimalStrategySynthesis}
    Our synthesized strategy $\strategy$ is an optimal robust strategy for $\mdpst$ to fulfill the $\ltlf$ specification $\varphi$.
\end{theorem}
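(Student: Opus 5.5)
The plan is to split the claim into two parts: the synthesized strategy $\strategy$ is \emph{feasible}, and its satisfaction probability $\prob^{\strategy}([\varphi])$ is maximal among all feasible strategies on $\mdpst$. Both parts are transferred through the chain $\mdpst \to \mdpst^{\times} \to \mdpst^{\unrolled}$, treating the memory $\langle q,c\rangle$ of $\strategy$ as exactly the extra state components of $\mdpst^{\unrolled}$.

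\emph{Step 1 (product correspondence).} Paths of $\mdpst$ and paths of $\mdpst^{\times}$ from $\bar{s}^{\times}$ are in bijection, because the DFA component is determined by $\delta$ and $L$. Under this bijection:
\begin{itemize}
\item cylinder probabilities agree for matching strategy and nature pairs;
\item costs and reload positions agree, since $C^{\times}$ and $R^{\times}$ are lifted verbatim;
\item the trace of a path lies in $[\varphi]$ iff the corresponding product path ends in $\acc^{\times}$, by the definition of $\lang(\dfa_{\varphi})$.
\end{itemize}
Natures also correspond one to one, since $\mathcal{F}^{\times}$ only tags each $\Theta$ with the fixed $q'$. Hence feasibility and $\prob^{\strategy}([\varphi])$ are preserved in both directions. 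History-dependent strategies on $\mdpst$ correspond to strategies on $\mdpst^{\times}$ with the DFA state as memory. I will implicitly allow this memory, which is also needed because Definition~\ref{def:strategy} is memoryless.

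\emph{Step 2 (unrolling).} Apply Proposition~\ref{prop:unrollingCorrectness}. Part~2 gives feasibility: the strategy induced by $\strategy^{\unrolled}$ only uses actions with $C^{\times}(s,a)\le c$, so every positive-probability path keeps a nonnegative level between resets, i.e.\ $\cost\le\rcap$. Part~2 also gives $\prob^{\strategy}([\varphi])=\prob^{\strategy^{\unrolled}}(\lfinally \acc^{\unrolled})$ via projection, and the memory update of $\strategy$ tracks exactly the second and third components along the projected path. Part~1 gives the upper bound: any feasible strategy $\strategy'$ on $\mdpst$ lifts via Step~1 and Part~1 to some $\strategy'^{\unrolled}$ whose probability of reaching $\acc^{\unrolled}$ equals $\prob^{\strategy'}([\varphi])$.

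\emph{Step 3 (optimality on $\mdpst^{\unrolled}$).} Invoke the cited result~\cite{DBLP:conf/ijcai/Yu0GKV24}: the robust Bellman fixed point $V$ equals $\max_\strategy\min_\nature\prob(\lfinally W)$. The argmax strategy attains $V(\bar{s}^{\unrolled})$, and memoryless strategies suffice for reachability. Since $W$ is almost surely winning, reaching $W$ robustly has the same value as reaching $\acc^{\unrolled}$. Terminating with $\bot$ upon hitting $\acc$ is harmless: the trace at that point is already accepted, and finite-trace semantics only scores the terminated path. Combining Steps~1--3, $\prob^{\strategy}([\varphi]) = V(\bar{s}^{\unrolled}) \ge \prob^{\strategy'}([\varphi])$ for every feasible $\strategy'$. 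This proves optimality, and $\beta$-robustness for any attainable $\beta$.

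\emph{Main obstacle.} The delicate point is the min over natures. The proposition matches path probabilities for strategies, but $\prob^{\strategy}$ is a minimum over natures, so the correspondence must hold \emph{uniformly} for every nature. I would make this explicit by building a bijection between natures on $\mdpst^{\unrolled}$ and history-dependent natures on $\mdpst^{\times}$: the unrolled successor sets $\Theta^{\unrolled}$ are in bijection with the sets $\Theta$, so feasible distributions transfer. The minima therefore coincide.

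A second subtlety is the mismatch between the agent stopping with $\bot$ and reaching $W$ rather than $\acc^{\unrolled}$. On $W\setminus \acc^{\unrolled}$ the strategy must continue with the almost-sure winning strategy rather than stop. I would handle this by composing $\strategy^{\unrolled}$ with that strategy, so that $\strategy$ outputs $\bot$ only when $q\in\acc$.
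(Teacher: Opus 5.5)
Your proposal is correct and takes essentially the same route as the paper. The paper's proof is a one-line appeal to the definitions and Proposition~\ref{prop:unrollingCorrectness}, matching feasible paths of $\mdpst^{\times}$ with paths of $\mdpst^{\unrolled}$ and then using the robust Bellman value on $\mdpst^{\unrolled}$. Your argument spells out what that line leaves implicit: the product correspondence, the need for the correspondence to hold for each nature rather than only for the minimised probabilities, the memory the synthesized strategy needs, and the continuation on $W \setminus \acc^{\unrolled}$.
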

\begin{proof}
    The proof is based on a direct application of the definitions and Proposition~\ref{prop:unrollingCorrectness} to equate the probability values between feasible paths of $\mdpst^{\times}$ and paths of $\mdpst^{\unrolled}$.
\end{proof}

\section{Optimized Robust Strategy Synthesis}
\label{sec:optimizedStrategySynthesis}

In the previous section, we introduced the unrolled approach to convert a CMDPST $\mdpst^{\times}$ to an MDPST $\mdpst^{\unrolled}$ which records the current resource level in each state.
This na\"ive approach builds an enormous state space for the MDPST and thus gives an optimal robust strategy of large size.
To this end, we propose to first prune the CMDPST $\mdpst^{\times}$ before converting to the MDPST.
Recall that the size of our synthesized strategy is proportional to that of the MDPST.
Therefore, our optimization is to remove all states from which the agent does not have a strategy maintaining feasibility.   

We observe that the approach we present in this section can also be used in the design phase of the system to check whether there exists a feasible strategy given a CMDPST $\mdpst$ and a set $T$ of target states.
That is, when deploying a system to complete a task, it is often necessary to check first whether the current system has a feasible strategy to possibly reach the target states instead of computing an optimal robust strategy directly.
Any deployment error found in this phase can be fixed immediately.
For instance, once we find out that there are no feasible strategies, we can look for the root causes, such as that there are not enough charging stations or their spatial distribution is not reasonable, so we can add more or adjust the locations of the charging stations in order to make it possible to fulfill the given (set of) task(s).

Another advantage of our proposed optimization is that it only involves simple graph exploration and value accounting.
Thus, it is easy to implement.
The central idea in our optimization is to compute the feasible region where the agent has a strategy that keeps it moving against any nature.
Then, we can just check whether some target state is in the feasible region.
For pruning, we only need to remove all states and transitions that are outside the feasible region.

\subsubsection{Feasible region computation}

\begin{algorithm}[t]
\caption{\proc{FeasibleRegion}}
\label{alg:feasibleRegion}
    \begin{algorithmic}[1]
        \Require CMDPST $\mdpst = (S, \bar{s}, A, \mathcal{F}, \mathcal{T}, L, C, R, \rcap)$%
        \State initialize $\Vec{a} \in A^{S}$ to be $\emptyset$ for each state s \label{alg:feasibleRegion:initA}%
        \State initialize $\Vec{c} \in \posreals^{S \cup \{\text{\textvisiblespace}\}}$ to be $\infty$ for each state s \label{alg:feasibleRegion:initC}%
        \If{$\bar{s} \in R$} \label{alg:feasibleRegion:reloadInitial}%
            % \State 
            $\Vec{c}(\bar{s}) \gets 0$ \label{alg:feasibleRegion:reloadInitial:updateC}%
        \EndIf%
        \State $S_{f} \gets \emptyset$; $E \gets \emptyset$; $s \gets \bar{s}$; $c \gets 0$ \label{alg:feasibleRegion:initVariables}%
        \Repeat \label{alg:feasibleRegion:mainLoopStart}%
            \State $S_{f} \gets S_{f} \cup \{s\}$ \label{alg:feasibleRegion:updateSf}%
            \ForAll{$a \in A(s)$ such that $c + C(s, a) \leq \rcap$} \label{alg:feasibleRegion:actionLoopStart} \label{alg:feasibleRegion:actionLoopGuard}%
                \State $\Vec{a}(s) \gets \Vec{a}(s) \cup \{a\}$ \label{alg:feasibleRegion:updateAs}%
                \ForAll{$s' \in \cup_{\Theta \in \mathcal{F}(s, a)} \Theta$} \label{alg:feasibleRegion:successorsLoopStart} \label{alg:feasibleRegion:successorsLoopGuard}%
                    \If{$\Vec{c}(s') > c + C(s,a)$} \label{alg:feasibleRegion:betterCost}%
                        \State $\Vec{c}(s') = c + C(s, a)$ \label{alg:feasibleRegion:updateCsp}%
                        \If{$s' \in R$} \label{alg:feasibleRegion:reloadCheck}%
                            % \State 
                            $\Vec{c}(s') \gets 0$ \label{alg:feasibleRegion:reloadCheck:updateCsp}%
                        \EndIf%
                        \State $E \gets E \cup \{s'\}$ \label{alg:feasibleRegion:updateE}%
                    \EndIf%
                \EndFor%
            \EndFor \label{alg:feasibleRegion:actionLoopEnd}%
            \State $E, s \gets \proc{PickOne}(E)$ \label{alg:feasibleRegion:pickNext}%
            \State $c \gets \Vec{c}(s)$ \label{alg:feasibleRegion:pickNextCost}%
        \Until{$s =\text{\textvisiblespace}$} \label{alg:feasibleRegion:mainLoopGuard} \label{alg:feasibleRegion:mainLoopEnd}%
        \State \Return $S_{f}, \Vec{a}$ \label{alg:feasibleRegion:return}%
    \end{algorithmic}
 \end{algorithm}

The \proc{FeasibleRegion} algorithm is shown as Algorithm~\ref{alg:feasibleRegion} and, on input a CMDPST $\mdpst$, it returns the set of states $S_{f}$ that are reachable from $\bar{s}$ by a feasible path, as well as the map $\Vec{a}$ that assigns to each state~$s$ the set of actions that preserve feasibility.
To compute $S_{f}$ and $\Vec{a}$, \proc{FeasibleRegion} makes use of two variables~$s$ and~$c$, keeping the state currently under evaluation and its cost, and two additional data structures: 
a set $E$ to remember the reached states that need to be evaluated and a vector $\Vec{c}$ that stores the minimum cost to reach each state from either the initial state or a reload state already in $S_{f}$ or $E$.
The computation proceeds as follows: 
starting from the initial state $s = \bar{s}$ and cost $c = 0$ (Line~\ref{alg:feasibleRegion:initVariables}), we add it to $S_{f}$ (Line~\ref{alg:feasibleRegion:updateSf}) and consider each action $a \in A(s)$ that preserves feasibility, i.e., $c + C(s, a) \leq \rcap$ (Line~\ref{alg:feasibleRegion:actionLoopGuard}). 
Since feasibility is preserved, we add it to $\Vec{a}(s)$ (Line~\ref{alg:feasibleRegion:updateAs}) and for each possible $a$-successor $s' \in \cup_{\Theta \in \mathcal{F}(s, a)} \Theta$ (Line~\ref{alg:feasibleRegion:successorsLoopGuard}), we check whether we have a lower cost to reach it (Line~\ref{alg:feasibleRegion:betterCost}); 
if this is the case, then we update its cost (Line~\ref{alg:feasibleRegion:updateCsp}) to $\Vec{c}(s') = c + C(s, a)$ or to $0$ if it is a reload state (Lines~\ref{alg:feasibleRegion:reloadCheck}) and add it to $E$ for further analysis (Line~\ref{alg:feasibleRegion:updateE}), as some action in $A(s')$ might have now become feasible.
Once we have dealt with all actions and successor states, we take the next state to be evaluated from $E$ (Line~\ref{alg:feasibleRegion:pickNext}) by calling $\proc{PickOne}(E)$.
This auxiliary procedure removes one arbitrary element from $E$ and returns it together with the updated $E$;
if no such element exists, then $(\emptyset, \text{\textvisiblespace})$ is returned, where $\text{\textvisiblespace} \notin S$ is a fresh symbol.
We then update the current cost $c$ of $s$ (Line~\ref{alg:feasibleRegion:pickNextCost}) and repeat these steps until we have no other state to evaluate (Line~\ref{alg:feasibleRegion:mainLoopGuard}) so we can return $S_{f}$ and $\Vec{a}$ (Line~\ref{alg:feasibleRegion:return}).

The fact that \proc{FeasibleRegion} terminates follows from the fact that $S$ is finite and that every time a state $s$ is added to $E$ (Line~\ref{alg:feasibleRegion:updateE}) its cost is strictly decreased (cf.~Lines~\ref{alg:feasibleRegion:betterCost}-\ref{alg:feasibleRegion:reloadCheck:updateCsp}) and the cost cannot be negative (cf.~Definition~\ref{def:cmdps}), so $s$ can be added to $E$ only a finite number of times.

It is easy to observe the following properties about $S_{f}$ and $\Vec{a}$ returned by \proc{FeasibleRegion}:
\begin{lemma}
\label{lem:alg:feasibleRegion:Properties}
    Given a CMDPST $\mdpst$, let $(S_{f}, \Vec{a}) = \proc{FeasibleRegion}(\mdpst)$.
    Then the following properties hold:
    \begin{enumerate}
    \item 
        $S_{f} \subseteq S$ and for each $s \in S_{f}$, $\Vec{a}(s) \subseteq A(s)$.
    \item
        For each $s \in S_{f}$, $a \in \Vec{a}(s)$, and $\Theta \in \mathcal{F}(s, a)$, $\Theta \subseteq S_{f}$.
    \item
        For each feasible path $\fpath \in \fpaths$, $\last(\fpath) \in S_{f}$.
    \item
        For each $s \in S_{f}$, there exists a feasible path $\fpath \in \fpaths$ such that $\last(\fpath) = s$.
    \item
        For each $s \in S \setminus S_{f}$, there is no path $\fpath \in \fpaths$ such that $\last(\fpath) = s$ that is feasible.
    \item
        For each $s \in S_{f}$ and each feasible path $\fpath \in \fpaths$ such that $\last(\fpath) = s$, for each $a \in \Vec{a}(s)$ and each $s' \in \cup_{\Theta \in \mathcal{F}(s, a)} \Theta$, the path $\fpath a s'$ is feasible.
    \item
        For each $s \in S_{f}$ and each feasible path $\fpath \in \fpaths$ such that $\last(\fpath) = s$, for each $a \in A(s) \setminus \Vec{a}(s)$ and each $s' \in \cup_{\Theta \in \mathcal{F}(s, a)} \Theta$, the path $\fpath a s'$ is not feasible.        
    \end{enumerate}
\end{lemma}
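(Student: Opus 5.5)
The plan is to establish one loop invariant for \proc{FeasibleRegion} together with a characterization of $\Vec{c}$ at termination, and then read off the seven items from it. For a finite path $\fpath = s_{0} a_{0} \dots s_{n}$, let $r(\fpath)$ denote its \emph{residual cost}. This is the accumulated consumption $\sum_{i=z}^{n-1} C(s_{i}, a_{i})$ since the last position $z$ at which the path visited a reload state, with $z = 0$ if there was none, and with $r(\fpath) = 0$ if $s_{n} \in R$. Feasibility of $\fpath a s'$ for a feasible $\fpath$ then amounts to $r(\fpath) + C(s,a) \leq \rcap$. The invariant I would prove by induction on the iterations of the main loop (Lines~\ref{alg:feasibleRegion:mainLoopStart}--\ref{alg:feasibleRegion:mainLoopEnd}) has three parts:
\begin{itemize}
\item[(i)] every state with $\Vec{c}(s) < \infty$ is in $S_{f} \cup E$ or is the current state;
\item[(ii)] $\Vec{c}(s)$ is the residual cost of some feasible path ending in $s$;
\item[(iii)] whenever $a \in \Vec{a}(s)$, the value of $c$ at the moment $a$ was added satisfied $c + C(s,a) \leq \rcap$.
\end{itemize}
Termination is already argued in the paper.

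\textbf{Easy items.} Item 1 is immediate, since only states $s$ and actions drawn from $A(s)$ are ever inserted. Item 2 follows because, whenever $a$ is added to $\Vec{a}(s)$, the inner loop (Line~\ref{alg:feasibleRegion:successorsLoopStart}) makes every $s' \in \Theta$ have finite $\Vec{c}(s')$. By (i) and the fact that $E$ is empty at termination, each such $s'$ ends up in $S_{f}$. Items 4 and 5 are the two directions of the same fact, once item 3 is available. Item 4 follows from (ii) by induction on the order of insertion into $S_{f}$: the witnessing feasible path to $s$ is extended by the action and successor that lowered $\Vec{c}(s)$. Item 5 is the contrapositive of item 3.

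\textbf{Items 3 and 7.} These are the completeness direction. For item 3 I would argue by induction on the length of a feasible path $\fpath a s'$ with $\last(\fpath) = s$. First I would show, as a termination property, that $\Vec{c}(s) \leq r(\fpath)$ for every feasible $\fpath$ ending in $s$. The reason is that $s$ was processed with its final value $c = \Vec{c}(s)$: every decrease of $\Vec{c}(s)$ reinserts $s$ into $E$. Then $\Vec{c}(s) + C(s,a) \leq r(\fpath) + C(s,a) \leq \rcap$, so $a$ was added at the last processing of $s$ and $s'$ received a finite cost, which puts $s'$ in $S_{f}$. Item 7 uses the same minimality. If $a \notin \Vec{a}(s)$, then the final processing of $s$ had $\Vec{c}(s) + C(s,a) > \rcap$. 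The residual of $\fpath$ must then be compared with $\Vec{c}(s)$, and the extension $\fpath a s'$ exceeds capacity.

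\textbf{Item 6, the main obstacle.} Here the statement quantifies over \emph{every} feasible $\fpath$ ending in $s$, so I must show $r(\fpath) + C(s,a) \leq \rcap$ for each of them, not only for a cost-minimal one. Minimality of $\Vec{c}$ only gives $\Vec{c}(s) \leq r(\fpath)$, which points the wrong way. Items 6 and 7 together therefore require $r(\fpath)$ and $\Vec{c}(s)$ to agree on the feasibility of every action. My first attempt will exploit the exact form of $\cost$ with $Z = \{0, N\} \cup \setcond{i}{s_{i} \in R}$. I would check whether, under this definition, feasibility of a one-step extension $\fpath a s'$ is governed only by the reset structure and the last charged segment, in a way that makes the residual of every feasible path reaching $s$ equivalent to $\Vec{c}(s)$ for that check. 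If that reading of $\cost$ does not give the required uniformity, I would fall back to proving (ii) in the strengthened form that every feasible path ending in $s$ has residual equal to $\Vec{c}(s)$. That would be done by induction along paths, using that the successors of one processed pair $(s,c)$ all receive the same value $c + C(s,a)$ or $0$. I expect this step to be where the proof either closes or needs the statement to be read relative to the minimal residual.
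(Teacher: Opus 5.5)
\textbf{Item 6 is false as stated.} Your plan breaks at item 6, and no proof can close it. Neither of your fallbacks can work, and the statement has to be read relative to the residual, as your last sentence suspects.

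Here is a counterexample. Take $R = \emptyset$ and $\rcap = 5$. Let $A(\bar{s}) = \{a, b\}$ with $\mathcal{F}(\bar{s}, a) = \mathcal{F}(\bar{s}, b) = \{\{t\}\}$, $C(\bar{s}, a) = 1$, $C(\bar{s}, b) = 4$. Let $A(t) = \{d\}$ with $\mathcal{F}(t, d) = \{\{u\}\}$, $C(t, d) = 3$.
\proc{FeasibleRegion} processes $\bar{s}$ with $c = 0$, enables both $a$ and $b$, and sets $\Vec{c}(t) = 1$. It then processes $t$ with $c = 1$ and adds $d$ to $\Vec{a}(t)$, because $1 + 3 \leq 5$. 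The path $\bar{s}\, b\, t$ is feasible (cost $4$), but $\bar{s}\, b\, t\, d\, u$ has cost $7 > 5$.

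Both fallbacks fail on this example:
\begin{itemize}
\item There are no reload states, so every reading of $\cost$, including the literal one with $Z = \{0, N\}$, gives these costs. Your first fallback therefore fails.
\item Your strengthened (ii) fails too: $t$ ends feasible paths with residuals $1$ and $4$. That all $a$-successors of one processed pair $(s, c)$ receive the same value says nothing about different paths into the same state.
\end{itemize}

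What your invariants do prove is the residual-relative version. For a feasible $\fpath$ ending in $s$, $\fpath a s'$ is feasible iff $r(\fpath) + C(s, a) \leq \rcap$. Moreover, $a \in \Vec{a}(s)$ iff this holds for \emph{some} feasible $\fpath$ ending in $s$, e.g.\ one of minimal residual. This is also all the paper's sketch can support, since its invariants are existential too; state item 6 in this form.

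The bad path $\bar{s}\, b\, t\, d\, u$ survives pruning. So the paper's remark that every path of $\mdpst^{\pruned}$ is feasible is wrong as stated, and so is the same-strategy argument built on it. Prune-then-unroll is still sound, but only because items 3 and 7 hold and the unrolling re-tracks the resource level.

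\textbf{The other items.} For these your route is the paper's loop-invariant argument made explicit, and it goes through with one fix. Your claim $\Vec{c}(s) \leq r(\fpath)$ fails for $s = \bar{s} \notin R$ and $\fpath = \bar{s}$. The reason is that \proc{FeasibleRegion} sets $c = 0$ but leaves $\Vec{c}(\bar{s}) = \infty$, so a cycle back to $\bar{s}$ can leave a positive final $\Vec{c}(\bar{s})$. Run the inductions for items 3 and 7 with the least value of $c$ at which $s$ was processed instead. For $s \neq \bar{s}$ this equals the final $\Vec{c}(s)$, since every decrease of $\Vec{c}(s)$ reinserts $s$ into $E$.
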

\begin{proof}
    The proof is based on a simple inspection of \proc{FeasibleRegion}'s code and on specific invariants for the outer loop (Lines~\ref{alg:feasibleRegion:mainLoopStart}-\ref{alg:feasibleRegion:mainLoopEnd}) like ``for all $s \in S_{f} \cup E$ there is a feasible path $\fpath \in \fpaths$ such that $\last(\fpath) = s$''
    %, ``for all $s \in R \cap (S_{f} \cup E)$, $\Vec{c}(s) = 0$'', 
    or ``for all $s \in S_{f} \cup E$, $\Vec{c}(s)$ equals the cost of a feasible path ending in $s$ that either starts in $\bar{s}$ or in a reload state in $S_{f}$ and no reload state occurs in its interior sub-path". 
\end{proof}

\subsubsection{Target feasibility checking}

Given a CMDPST $\mdpst$ and a set of target states $T$, we can easily verify whether $T$ can be reached by some feasible path by first computing $(S_{f}, \Vec{a}) = \proc{FeasibleRegion}(\mdpst)$ and then checking whether $T \cap S_{f} \neq \emptyset$.
Similarly, we can verify whether all states in $T$ can be reached by some feasible path by checking $T \subseteq S_{f}$.
Note however that even if $T \subseteq S_{f}$ holds, there is no guarantee that $\prob^{\strategy}(\lfinally T) = 1$ for some feasible strategy $\strategy$:
there can still be some nature preventing $T$ to be reached with probability $1$.

\subsubsection{Feasibility pruning}

Pruning a CMDPST to its feasible region is rather easy: 
it is enough to remove all states not in $S_{f}$ and all transitions from the states $s$ in $S_{f}$ whose action is not in $\Vec{a}(s)$;
by Lemma~\ref{lem:alg:feasibleRegion:Properties}, all paths in the pruned CMDPST are feasible.
Formally, given a CMDPST $\mdpst = (S, \bar{s}, A, \mathcal{F}, \mathcal{T}, L, C, R, \rcap)$ and $(S_{f}, \Vec{a}) = \proc{FeasibleRegion}(\mdpst)$, the pruned CMDPST $\mdpst^{\pruned} = (S^{\pruned}, \bar{s}^{\pruned}, A^{\pruned}, \mathcal{F}^{\pruned}, \mathcal{T}^{\pruned}, L^{\pruned}, C^{\pruned}, R^{\pruned}, \rcap^{\pruned})$ has 
\begin{itemize}
\item
    $S^{\pruned} = S_{f}$ and $\bar{s}^{\pruned} = \bar{s}$;
\item
    $A^{\pruned} = \cup_{s \in S_{f}} \Vec{a}(s)$;
\item
    $\mathcal{F}^{\pruned}$ is defined as $\mathcal{F}^{\pruned}(s, a) = \mathcal{F}(s, a)$ only for each $s \in S^{\pruned}$ and $a \in \Vec{a}(s)$;
\item
    $\mathcal{T}^{\pruned}$ is defined as $\mathcal{T}^{\pruned}(s, a, \Theta) = \mathcal{T}(s, a, \Theta)$ only for each $s \in S^{\pruned}$, $a \in \Vec{a}(s)$, and $\Theta \in \mathcal{F}^{\pruned}(s, a)$;
\item
    $L^{\pruned}$ is defined as $L^{\pruned}(s) = L(s)$ only for each $s \in S^{\pruned}$;
\item
    $C^{\pruned}$ is defined as $C^{\pruned}(s, a) = C(s, a)$ only for each $s \in S^{\pruned}$ and $a \in A^{\pruned}(s)$;
\item 
    $R^{\pruned} = R \cap S_{f}$; and
\item 
    $\rcap^{\pruned} = \rcap$.
\end{itemize}

Similarly to CMDPST unrolling, we can prove the following correctness result about feasibility pruning:
\begin{proposition}
\label{prop:pruningCorrectness}
    Given a CMDPST $\mdpst$, let $\mdpst^{\pruned}$ be its feasibility pruned CMDPST.
    % Then, the following properties hold:
    Then, we have that
    \begin{enumerate}
    \item 
        for every feasible strategy $\strategy \in \strategies_{\mdpst}$ there exists a strategy $\strategy^{\pruned} \in \strategies_{\mdpst^{\pruned}}$ such that for every feasible path $\fpath \in \fpaths_{\mdpst}$ we have $\prob^{\strategy^{\pruned}}_{\mdpst^{\pruned}}(\fpath) = \prob^{\strategy}_{\mdpst}(\fpath)$;
    \item 
        for every strategy $\strategy^{\pruned} \in \strategies_{\mdpst^{\pruned}}$, there exists a feasible strategy $\strategy \in \strategies_{\mdpst}$ such that for every path $\fpath^{\pruned} \in \fpaths_{\mdpst^{\pruned}}$ we have $\prob^{\strategy}_{\mdpst}(\fpath^{\pruned}) = \prob^{\strategy^{\pruned}}_{\mdpst^{\pruned}}(\fpath^{\pruned})$.
    \end{enumerate}
\end{proposition}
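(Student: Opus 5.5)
The plan mirrors the proof of Proposition~\ref{prop:unrollingCorrectness}: identify strategies on $\mdpst$ and $\mdpst^{\pruned}$ on the common states $S_{f}$, and show that the cylinder probabilities of feasible paths agree. The central tool is Lemma~\ref{lem:alg:feasibleRegion:Properties}. Items 3 and 5 say that feasible paths of $\mdpst$ live entirely in $S_{f}$. Items 6 and 7 say that, along a feasible path, the actions preserving feasibility at $s$ are exactly those in $\Vec{a}(s)$. Item 2 says $\mdpst^{\pruned}$ is closed under its transitions. Every path of $\mdpst^{\pruned}$ is therefore a feasible path of $\mdpst$, and conversely every feasible path of $\mdpst$ is a path of $\mdpst^{\pruned}$ (by induction on length, using items 3 and 7 to show each action used lies in $\Vec{a}$).

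\textbf{Property 2 (prune to original).} Given $\strategy^{\pruned}$, define $\strategy(s) = \strategy^{\pruned}(s)$ for $s \in S_{f}$ and $\strategy(s) = \bot$ (a Dirac distribution on $\bot$) otherwise; this is legal because $\Vec{a}(s) \subseteq A(s)$ by item 1. Natures on $\mdpst$ and on $\mdpst^{\pruned}$ correspond on the relevant pairs, since $\mathcal{F}^{\pruned}(s,a) = \mathcal{F}(s,a)$ and $\mathcal{T}^{\pruned} = \mathcal{T}$ there. Hence the feasible-distribution sets $\mathfrak{H}_{s}^{a}$ coincide, and any nature on one restricts or extends to the other. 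By induction on the length of $\fpath^{\pruned}$, the inductive cylinder formula gives identical factors $\strategy(\last)(a)\cdot\nature(\last,a)(t)$ in both models, and the final $\bot$ factor also agrees. Equality therefore holds for each nature. Taking the minimum over natures then gives equality of $\prob^{\strategy}$ and $\prob^{\strategy^{\pruned}}$.

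For feasibility of $\strategy$, I argue by induction that every positive-probability path under $\strategy$ stays inside $\mdpst^{\pruned}$: $\strategy$ only picks actions in $\Vec{a}(s)$, and successors remain in $S_{f}$ by item 2. Every such path is then feasible by item 6, starting from the trivially feasible path $\bar{s}$.

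\textbf{Property 1 (original to prune).} Given a feasible $\strategy$, let $\strategy^{\pruned}(s) = \strategy(s)$ for $s \in S_{f}$. We must check that $\strategy(s)$ is supported on $\Vec{a}(s) \cup \{\bot\}$ whenever $s$ is reached with positive probability. Otherwise, item 7 yields a positive-probability non-feasible extension, which contradicts $\prob^{\strategy}(\text{feasible}) = 1$. At unreachable states of $S_{f}$, where $\strategy$ may use pruned actions, we redefine $\strategy^{\pruned}$ there arbitrarily, e.g.\ as $\bot$; this changes no probability of a reachable path. The path-probability equality then follows by the same inductive computation.

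\textbf{Main obstacle.} The hardest step is the quantifier interplay with the minimum over natures. The equality of cylinder probabilities must hold \emph{per nature}, and the worst-case nature must not exploit states outside $S_{f}$. This is where we need that natures of the two models correspond bijectively on the pairs $(s,a)$ with $s \in S_{f}$ and $a \in \Vec{a}(s)$, and that the remaining pairs are never reached with positive probability. It also requires reading ``positive probability under $\prob^{\strategy}$'' as ``positive under some nature''. I would make this explicit as a small sublemma before the induction.
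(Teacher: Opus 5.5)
\textbf{Verdict: Property~1 goes through, but Property~2 is false as stated, so your proof (and the paper's) cannot close it.}

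Your route is the paper's route: run the same strategy in $\mdpst$ and $\mdpst^{\pruned}$, then invoke Lemma~\ref{lem:alg:feasibleRegion:Properties}. Your argument for Property~1 is sound. It only needs items 1, 3 and 7, which do hold for \proc{FeasibleRegion}, plus prefix-closure of feasibility.

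The problem is the feasibility step of Property~2, and also your opening claim that every path of $\mdpst^{\pruned}$ is a feasible path of $\mdpst$. Both rest on item~6, and item~6 is false for the algorithm as written. Since $s$ is reprocessed whenever $\Vec{c}(s)$ drops, $\Vec{a}(s)$ ends up containing every $a$ with $\Vec{c}(s) + C(s,a) \leq \rcap$ for the final, \emph{minimal} accumulated cost $\Vec{c}(s)$. So $a \in \Vec{a}(s)$ is safe only along a cheapest feasible path into $s$, not along every feasible path.

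Concretely, take $R = \emptyset$ and $\rcap = 6$. Let $\bar{s}$ have actions $a_{1}, a_{2}$ with $\mathcal{F}(\bar{s}, a_{i}) = \{\{s\}\}$, $C(\bar{s}, a_{1}) = 1$ and $C(\bar{s}, a_{2}) = 5$. Let $s$ have one action $b$ with $\mathcal{F}(s, b) = \{\{t\}\}$ and $C(s, b) = 3$. The algorithm returns $\Vec{c}(s) = 1$, $\Vec{a}(\bar{s}) = \{a_{1}, a_{2}\}$ and $\Vec{a}(s) = \{b\}$.
\begin{itemize}
\item The path $\bar{s} a_{2} s$ is feasible, but $\bar{s} a_{2} s b t$ costs $8 > \rcap$, which contradicts item~6.
\item The strategy $\strategy^{\pruned}$ with $\bar{s} \mapsto a_{2}$, $s \mapsto b$, $t \mapsto \bot$ gives this infeasible path probability $1$ in $\mdpst^{\pruned}$ under every nature. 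So no feasible $\strategy$ on $\mdpst$ can match it.
\end{itemize}
Hence Property~2 is false as stated. The paper's one-line proof has the same hole, and the difficulty is not where your ``main obstacle'' paragraph puts it.

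No smarter pruning of states and actions can fix this. Apply Property~1 to the feasible strategies $(\bar{s} \mapsto a_{2}, s \mapsto \bot)$ and $(\bar{s} \mapsto a_{1}, s \mapsto b, t \mapsto \bot)$: this forces $a_{1}$, $a_{2}$ and $b$ all to survive. Feasibility depends on accumulated cost, i.e.\ on history, so deleting states and actions cannot capture it.

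What you can prove, and what the pipeline actually needs, is Property~1 plus the following fact: unrolling $\mdpst^{\pruned}$ gives the same reachable MDPST as unrolling $\mdpst$. The reason is that an action enabled at a reachable unrolled state $(s, c)$ extends a feasible path feasibly, so it lies in $\Vec{a}(s)$ by item~7. Feasibility is then enforced by the resource counter, not by the pruning.

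A secondary point concerns termination. Under the paper's literal definition, feasibility means $\prob^{\strategy}(\setcond{\fpath}{\fpath \text{ feasible}}) = 1$, and $\prob^{\strategy}(\fpath)$ includes the termination factor $\strategy(\last(\fpath))(\bot)$. So showing that every positive-probability path is feasible is not enough. A $\strategy^{\pruned}$ that never plays $\bot$ would also break Property~2, unless you read feasibility as ``no infeasible prefix has positive probability''.

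By contrast, your worry about natures is harmless. $\mathcal{F}^{\pruned}$ and $\mathcal{T}^{\pruned}$ coincide with $\mathcal{F}$ and $\mathcal{T}$ on the retained pairs, so natures restrict and extend exactly as you describe.
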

\begin{proof}
    Both properties can be proved by using the same strategy on both CMDPSTs and then by a direct application of the definitions and Lemma~\ref{lem:alg:feasibleRegion:Properties}.
\end{proof}

Proposition~\ref{prop:pruningCorrectness} ensures that pruning does not alter the maximum value we can obtain in solving the optimal robust strategy synthesis problem, similarly to Proposition~\ref{prop:unrollingCorrectness} and its use in the proof of Theorem~\ref{thm:optimalStrategySynthesis}.

\section{Experiments}
\label{sec:experiments}

In this section, we evaluate our approach on an extended AGV transportation scenario. 
Compared with the motivating example in Section~\ref{sec:motivatingExample}, the experimental setting is enriched in two aspects. 
First, instead of a single AGV, we consider multiple AGVs navigating the warehouse network simultaneously, which increases the interaction between agents and the complexity of the strategy space. AGVs operate in the same environment and interactions are handled by disabling actions that would lead to collisions, i.e., multiple agents cannot occupy the same warehouse simultaneously.
Second, the action model is augmented: in addition to the movements considered before, AGVs are now allowed to traverse alternative connections between warehouses (e.g., shortcuts or cross-aisle passages), which leads to a larger set of possible transitions. 
These extensions make the experiments more representative of realistic logistics scenarios and provide a more demanding benchmark for testing both the performance and accuracy of the proposed methods. 
To systematically assess performance, we vary the warehouse network size from 4 to 16 and report the resulting state space, runtime, and synthesis outcomes.

All simulations are carried out on a laptop with an Intel Core i5-13500HX fourteen-core processor, 16GB DDR5 RAM, and the implementation code can be found at: https://github.com/yihaoyin/CMDPST.git .

In the baseline 4-warehouse scenario, the original MDPST consists of 26 states and 110 transitions. 
After applying the unrolling construction given in Section~\ref{sec:unroll}, the model expands to 286 states and 1846 transitions. 
When further combined with the DFA encoding the $\ltlf$ formula $\neg W_{o} \luntil W_{d}$, which requires to eventually reach the target delivery warehouse $W_{d}$ while avoiding any obstacle warehouse $W_{o}$, 
the resulting product model contains 858 states and 4014 transitions. 
On this product, the optimal robust strategy extraction yields a reachability probability of 0.883 with a computation time of 0.52 seconds. 
We then apply the state-space pruning technique proposed in Section~\ref{sec:optimizedStrategySynthesis}, which eliminates states and transitions that are not feasible. 
This reduction shrinks the product to 275 states and 1743 transitions, leading to a much shorter time of 0.19 seconds while obtaining the same reachability probability value (0.883) as expected. 

\begin{table}[t]
\caption{CMDPST Optimal Strategy Synthesis}
\label{tab:experiments}
\centering
\setlength{\tabcolsep}{1.2mm}
\begin{tabular}{r|rr|rr|rr|rr|l}
\hline
\multicolumn{1}{c|}{Ware-} & \multicolumn{2}{c|}{Time (s)} & \multicolumn{2}{c|}{\# States} & \multicolumn{2}{c|}{\# Transitions} & \multicolumn{1}{c}{Prob-} \\
\multicolumn{1}{c|}{houses} & \multicolumn{1}{c}{na\"ive} & \multicolumn{1}{c|}{pruned} & \multicolumn{1}{c}{na\"ive} & \multicolumn{1}{c|}{pruned} & \multicolumn{1}{c}{na\"ive} & \multicolumn{1}{c|}{pruned} & \multicolumn{1}{c}{ability}\\
\hline
\hline
4 & 0.52 & 0.22& 858 & 275 & 4014 & 1743 & 0.883 \\
\hline
5 & 1.17  & 0.66 & 1386 & 489 & 8472 & 3982 & 0.804 \\
\hline
6 & 2.42 & 1.61  & 2046 & 738 & 15093 & 7166 & 0.833 \\
\hline
7 & 5.89 & 3.52 & 2838 & 1105 & 24651 & 12898 & 0.842 \\
\hline
8 & 9.76 & 6.94 & 3762 & 1494 & 37530 & 20111 & 0.882 \\
\hline
9 & 17.41 & 10.22& 4818 & 1935 & 55488 & 29756 & 0.875 \\
\hline
10 & 28.59 & 17.39  & 6006 & 2436 & 76911 & 41707 & 0.811 \\
\hline
11 & 44.78 & 25.33 & 7326 & 2993 & 103512 & 56782 & 0.875 \\
\hline
12 & 63.37 & 41.93 & 8778 & 3606 & 135201 & 74657 & 0.811 \\
\hline
13 & 92.55 & 63.59  & 10362 & 4275 & 173256 & 96440 & 0.875 \\
\hline
14 & 139.76 & 89.31 & 12078 & 5000 & 217203 & 121415 & 0.811 \\
\hline
15 & 183.25 & 127.57 & 13926 & 5781 & 268752 & 151130 & 0.875 \\
\hline
16 & 246.73 & 193.29 & 15906 & 6618  & 326949 & 184381  & 0.811  \\
\hline
\end{tabular}
% \vspace{-20pt} 
\end{table}

To test the scalability of our optimal robust strategy synthesis approach, we evaluate networks having from 4 to 16 warehouses, with the results reported in Table~\ref{tab:experiments}. 
The different columns report the running time in seconds of the synthesis pipelines, the number of states and transitions in the unrolled MDPST, and the computed probability value.
Moreover, ``na\"ive'' refers to the baseline pipeline consisting of unrolling and Bellman updates (Section~\ref{sec:unroll}), 
while ``pruned'' corresponds to the pruning-based pipeline composed by pruning (Section~\ref{sec:optimizedStrategySynthesis}), unrolling, and Bellman updates. 

The results in Table~\ref{tab:experiments} clearly demonstrate that the pruning-based method presented in Section~\ref{sec:optimizedStrategySynthesis} consistently reduces the state space and achieves notable runtime improvements over the na\"ive approach, without compromising the correctness of the synthesized strategy.

\section{Conclusion}
\label{sec:conclusion}

In this paper, we considered the robust planning for robotic systems with mixed uncertainty and constrained resources.
To this end, we formulated a novel resource-constrained optimal robust strategy synthesis problem over CMDPSTs and $\ltlf$ specifications.
CMDPSTs support modelling adversarial transition uncertainty, state-dependent resource consumption, reload states, and bounded capacity, enabling a formal treatment of the considered systems in a unified framework. 
Then we proposed a synthesis framework based on an unrolled product construction between the CMDPST and a DFA encoding the $\ltlf$ formula. 
To address the challenge of adversarial transitions and unknown successor sets in MDPSTs, we further developed a feasible region-based pruning to reduce the size of the analyzed system while preserving its feasibility properties.
Experiments confirmed the practicality of our approach.

\paragraph*{Aknowledgement}
This work was supported in part by  the CAS Project for Young Scientists in Basic Research (Grant No.\@ YSBR-040), and the National Key R\&D Program of China (Grant No.\@ 2025YFE0220300).
% \newline\protect\includegraphics[height=8pt]{jpg/EU.jpg} 
% This project is part of the European Union’s Horizon 2020 research and innovation programme under the Marie Sk\l{}odowska-Curie grant no.\@ 101008233.

%\input{LP algorithm}

\bibliographystyle{IEEEtran}
\bibliography{bibfiles/references-slim}

\end{document}